\def\supp{Appendix\xspace}
\title{Bridging the Gap Between Value and Policy Based\\ Reinforcement Learning}
\newcommand{\comment}[1]{}
\renewcommand{\vec}[1]{\boldsymbol{\mathbf{#1}}}
\renewcommand{\mid}[0]{\:\vert\:}
\def\longname{Path Consistency Learning\xspace}
\def\shortname{PCL\xspace}
\def\longuname{Unified Path Consistency Learning\xspace}
\def\shortuname{Unified PCL\xspace}
\def\qop{Q^\circ}
\def\qs{Q^*}
\def\Qstar{Q^*}
\def\expected{\mathbb{E}}
\def\pitheta{\pi_\theta}
\def\Vphi{V_\phi}
\def\Qrho{Q_\rho}
\def\Vrho{V_\rho}
\def\pirho{\pi_\rho}
\def\pistar{\pi^*}
\def\Vstar{V^*}
\def\svstar{v^*}
\def\vop{V^\circ}
\def\piop{\pi^\circ}
\def\svop{v^\circ}
\def\pistar{\pi^*}
\def\vpi{V^\pi}
\def\vstar{V^*}
\def\S{S}
\def\E{E}
\newcommand{\R}[1]{R}
\newcommand{\G}[1]{G}
\newcommand{\HH}[1]{\mathbb{H}}
\newcommand{\Ad}[1]{A}
\newcommand{\C}[3]{C_{{#2},{#3}}}
\newcommand{\CC}[1]{C}
\def\calB{\mathcal{B}}
\def\eg{{\em e.g.,}}
\def\ie{{\em i.e.,}}
\def\vs{{\em vs.}\xspace}
\newtheorem{theorem}{Theorem}
\newtheorem{lemma}[theorem]{Lemma}
\newtheorem{corollary}[theorem]{Corollary}
\newcommand{\figref}[1]{Figure~\ref{#1}}
\newcommand{\theoref}[1]{Theorem~\ref{#1}}
\newcommand{\secref}[1]{Section~\ref{#1}}
\def\temp{\tau}
\def\mro{O_{\text{ER}}}
\def\ento{O_{\text{ENT}}}
\def\pclo{O_{\text{\shortname}}}
\newcommand{\one}[1]{\mathbbm{1}[#1]}
\newcommand{\opensource}{\url{https://github.com/tensorflow/models/tree/master/research/pcl_rl}}
\begin{document} 

\author{
\vspace*{-.2cm}
\begin{tabular}{c@{\hspace*{.8cm}}c@{\hspace*{.8cm}}c@{\hspace*{.8cm}}c}
  {\bf Ofir Nachum$^1$}\addtocounter{footnote}{1}
  & {\bf Mohammad Norouzi} & {\bf Kelvin Xu$^1$} & {\bf Dale Schuurmans}\\[.1cm]
  \multicolumn{4}{c}{\normalfont \texttt{\{ofirnachum,mnorouzi,kelvinxx\}@google.com,~daes@ualberta.ca}}\\[.1cm]
  \multicolumn{4}{c}{\normalfont Google Brain}\\[-.3cm]
\end{tabular}
}

\maketitle

\footnotetext{Work done as a member of the Google Brain Residency program (\url{g.co/brainresidency})}

\begin{abstract}

We establish a new connection between value and policy based 
reinforcement learning (RL) 
based on a relationship between softmax temporal 
value consistency and policy optimality under entropy regularization.
Specifically, we show that softmax consistent action values
correspond to optimal entropy regularized policy probabilities
\emph{along any action sequence}, regardless of provenance.
From this observation, we develop a new RL algorithm,
\emph{\longname (\shortname)}, that minimizes a notion of soft consistency error along 
multi-step action sequences extracted from both on- and off-policy traces.
We examine the behavior of PCL in different 
scenarios
and show that PCL
can be interpreted as generalizing both actor-critic
and Q-learning algorithms.
We subsequently deepen the relationship by showing how a \emph{single} model 
can be used to represent both a policy and the corresponding softmax state values,
eliminating the need for a separate critic.
%
%
The experimental evaluation demonstrates that \shortname
significantly outperforms strong actor-critic and Q-learning
baselines across several benchmarks.\footnote{An implementation of PCL can be found at \opensource}

\end{abstract}

\section{Introduction}
\label{intro}

Model-free RL aims to acquire an effective behavior policy through 
trial and error interaction with a black box environment.
The goal is to optimize the quality of an agent's behavior policy
in terms of the total expected discounted reward.
Model-free RL has a myriad of applications in
games~\cite{atarinature,tesauro1995},
robotics~\cite{kober2013,levine2016end},
and marketing~\cite{li2010,theocharous2015}, to name a few.
Recently, the impact of model-free RL has been expanded through the use of 
deep neural networks,
which promise to replace manual feature engineering 
with end-to-end learning of value and policy representations.
Unfortunately, a key challenge remains how best to 
combine the advantages of value and policy based RL approaches
in the presence of deep function approximators,
while mitigating their shortcomings.
Although recent progress has been made 
in combining value and policy based methods, this issue is not yet settled,
and the intricacies of each perspective are exacerbated by deep models.

The primary advantage of policy based approaches, 
such as REINFORCE \cite{williams92},
is that they directly optimize the quantity of interest while remaining
stable under function approximation (given a sufficiently small learning rate). 
Their biggest drawback is sample inefficiency:
since policy gradients are estimated from rollouts
the variance is often extreme.
Although policy updates can be improved by the use of appropriate geometry
\cite{kakade01,petersetal10,trpo2015},
the need for variance reduction remains paramount.
Actor-critic methods have thus become popular
\cite{schulmaniclr2016,silver14ddpg,sutton1999policy},
because they use value approximators to replace rollout estimates
and reduce variance, at the cost of some bias.
Nevertheless, on-policy learning remains inherently sample inefficient
\cite{guetal17};
by estimating quantities defined by the \emph{current} policy,
either on-policy data must be used,
or updating must be sufficiently slow to avoid significant bias.
Naive importance correction is hardly able
to overcome these shortcomings in practice
\cite{precup2000eligibility,precup2001off}.

By contrast, value based methods, such as Q-learning
\cite{watkins1992q,atarinature,pdqn,wangetal16,mnih2016asynchronous},
can learn from \emph{any} trajectory sampled from the same environment.
Such ``off-policy'' methods are able to exploit data from other sources, such as experts,
making them inherently more sample efficient than on-policy methods~\cite{guetal17}.
Their key drawback is that off-policy learning does not stably interact with
function approximation~\cite[Chap.11]{suttonbook_2nd_ed}.
The practical consequence is that extensive hyperparameter tuning 
can be required to obtain stable behavior.
Despite practical success \cite{atarinature},
there is also little theoretical understanding of how
deep Q-learning might obtain near-optimal objective values.

Ideally, one would like to combine the unbiasedness and stability 
of on-policy training with the data efficiency of off-policy approaches.
This desire has motivated substantial recent work on \emph{off-policy} 
actor-critic 
methods,
where the data efficiency of policy gradient is improved by 
training an off-policy critic
\cite{lillicrap2015continuous,mnih2016asynchronous,guetal17}.
Although such methods have demonstrated improvements over
on-policy actor-critic approaches, they have not resolved 
the theoretical difficulty associated with off-policy learning
under function approximation.
Hence, current methods remain potentially unstable and
require 
specialized algorithmic and theoretical development
as well as delicate tuning to be effective in practice \cite{guetal17, acer, reactor}.

In this paper, we exploit a relationship between
policy optimization under entropy regularization and softmax value consistency
to obtain a new form of stable off-policy learning.
Even though entropy regularized policy optimization is a well studied topic in RL
\cite{williams1991function,todorov2006linearly,todorov10,
ziebart2010modeling,azar,azaretal11,azaretal12,fox}--%
in fact,
one that has been attracting renewed interest from concurrent work
\cite{pgq2017,haarnojaetal17}--%
we contribute new observations to this study that are essential for the 
methods we propose:
first, we identify a strong form of path consistency
that relates optimal policy probabilities under entropy regularization
to softmax consistent state values for \emph{any} action sequence;
second, we use this result to formulate a novel optimization objective
that allows for a stable form of off-policy actor-critic learning;
finally, we observe that under this objective the actor and critic can
be unified in a single model that coherently fulfills both roles.

\section{Notation \& Background}
\label{prelim}

We model an agent's behavior by a parametric distribution
$\pitheta(a \mid s)$ defined by a neural network over a finite set of
actions.  At iteration $t$, the agent encounters a state $s_t$ and
performs an action $a_t$ sampled from $\pitheta(a \mid s_t)$.  The
environment then returns a scalar reward $r_t$ and transitions to the
next state $s_{t+1}$.  

{\bf Note}: 
Our main results identify specific
properties that hold for arbitrary action sequences.
To keep the presentation clear and focus attention on the key properties, 
we provide a simplified presentation in the main body of this paper
by assuming \emph{deterministic} state dynamics.
This restriction is not necessary, and in 
Appendix~\ref{app:consist}
we provide a full treatment of the same concepts generalized to
stochastic state dynamics.
All of the desired properties continue to hold in the general case
and the algorithms proposed remain unaffected.


For simplicity, we
assume
the per-step reward $r_t$ and the next state $s_{t+1}$ are given by 
functions $r_t=r(s_t, a_t)$ and $s_{t+1}=f(s_t, a_t)$
specified by the environment. 
We begin the formulation by reviewing the key elements of
\mbox{Q-learning}~\cite{qlearning,watkins1992q}, 
which uses a notion of hard-max Bellman backup to enable off-policy TD control. 
First, observe that the expected discounted reward objective,
$\mro(s, \pi)$, can be recursively expressed as,
\begin{eqnarray}
  \mro(s, \pi) ~=~ \sum_a \pi(a \mid s)\,[r(s, a) + \gamma \mro(s', \pi)]~,~~~~~~~~~\text{where}~s'=f(s,a)~.
\end{eqnarray}
Let $\vop(s)$ denote the optimal state value at a state $s$ 
given by the maximum value of $\mro(s, \pi)$ over policies,
\ie~$\vop(s) = \mathrm{max}_\pi \mro(s, \pi)$.
Accordingly, let $\piop$ denote the optimal policy that results in $\vop(s)$
(for simplicity, assume there is one unique optimal policy), 
\ie~$\piop = \operatorname{argmax}_\pi \mro(s, \pi)$. 
Such an optimal policy is a one-hot distribution that assigns a probability 
of $1$ to an action with maximal return and $0$ elsewhere.
\comment{, \ie~$\piop(a \mid s) = \one{a =
    \operatorname{argmax}_a (r(s,a) + \gamma \vop(\pi,s'))}$} 
Thus we have
\begin{equation}
\vop(s) ~=~ \mro(s, \piop) = \max_a (r(s,a) + \gamma \vop(s')).
\label{eq:hard_state}
\end{equation}
This is the well-known
hard-max Bellman temporal consistency.
Instead of state values, one can equivalently (and more commonly)
express this consistency in terms of optimal action values, $\qop$:
\begin{equation}
\label{eq:hard}  \qop(s, a) ~=~ r(s, a) + \gamma\max_{a'} \qop(s^\prime, a')~.
\end{equation}
Q-learning relies on a value iteration algorithm based on
\eqref{eq:hard}, where 
$Q(s, a)$ is bootstrapped
based on successor action values $Q(s', a')$. 
\comment{
Previous work~\cite{watkins1992q,jaakkola1994} has proved that such 
hard-max iterative backups converge
to a unique fixed point that
maximizes $\mro(s, \pi)$ for each state $s$, 
when tabular action values are used. 
More recent work~\cite{dqn, atari} has successfully applied Q-learning to more
difficult tasks requiring function approximation using neural networks.}

\comment{

Accordingly, the $\mro$-optimal
state value of $s_0$ based on $\{\svop_1, \dots, \svop_n\}$ is given
by
\begin{equation}
\svop_0 = \mro(\piop) = \max_i (r_i + \gamma \svop_i).
\end{equation}
Thus, we have derived the hard-max
temporal consistency property introduced in \eqref{eq:hard2}.

\begin{eqnarray}
  \mro(s, \pi) &=& \sum_a \pi(a \mid s)\,\sum_{s', r} p(s', r \mid s, a)\,[r(s, a) + \gamma \vop(s')]\\
   &=& \sum_a \pi(a \mid s)\,\expected_{(s',\,r)}\,[r(s, a) + \gamma \vop(s')]~.
\end{eqnarray}

\begin{eqnarray}
  \label{eq:hard2} \vop(s) ~=~ \mro(\piop, s) &=& \max_{a} \sum_{s', r} p(s', r \mid s, a)\,[r(s, a) + \gamma \vop(s')]\\
&=& \max_{a} \,\expected_{(s',\,r)}\,[r(s, a) + \gamma \vop(s')]~.
\end{eqnarray}
}

\section{Softmax Temporal Consistency}
\label{method}

In this paper, we study the optimal state and action values for a 
\emph{softmax} form of temporal consistency~\cite{ziebart2008maximum,ziebart2010modeling,fox},
which arises by augmenting the standard expected reward objective 
with a \emph{discounted entropy regularizer}. 
Entropy regularization \cite{williams1991function} 
encourages exploration and
helps prevent early convergence to sub-optimal policies, 
as has been confirmed in practice (\eg~\cite{mnih2016asynchronous,urex}).  
In this case, one can express regularized expected reward as a sum of
the expected reward and a discounted entropy term,
\begin{equation}
\ento(s, \pi) ~=~ \mro(s, \pi) + \temp \HH{\gamma}(s, \pi)~,
\end{equation}
where $\tau \ge 0$ is a user-specified temperature parameter that
controls the degree of entropy regularization,
and the discounted entropy $\HH{\gamma}(s, \pi)$ is recursively defined as
\begin{equation}
\HH{\gamma}(s, \pi) ~=~ \sum_a \pi(a\mid s) \,[-\log \pi(a\mid s) + \gamma\,\HH{\gamma}(s', \pi)]~.
\label{eq:entropy}
\end{equation}
The objective $\ento(s, \pi)$ can then be re-expressed recursively as,
\begin{equation}
\ento(s, \pi) ~=~ \sum_a \pi(a\mid s) \,[ r(s,a) -\temp \log \pi(a\mid s) + \gamma \ento(s', \pi)]~.
\label{eq:objent}
\end{equation}
Note that when $\gamma=1$ this is equivalent to the 
entropy regularized objective proposed in~\cite{williams1991function}.

Let $\Vstar(s) = \mathrm{max}_\pi \ento(s,\pi)$ denote the soft
optimal state value at a state $s$ and let $\pistar(a \mid s)$ denote
the optimal policy at $s$ that attains the maximum of $\ento(s, \pi)$. 
When $\temp > 0$, the optimal policy is no longer a one-hot
distribution, since the entropy term prefers the use of
policies with more uncertainty. 
We characterize the optimal policy
$\pistar(a \mid s)$ in terms of the $\ento$-optimal state values of
successor states $\Vstar(s')$ as a Boltzmann distribution of the form,
\begin{equation}
\pistar(a \mid s) ~\propto~ \exp\{(r(s,a) + \gamma \Vstar(s')) /\tau\}~.
\label{eq:pistar}
\end{equation}
It can be verified that this is the solution by noting that the $\ento(s, \pi)$ 
objective is simply a $\tau$-scaled constant-shifted KL-divergence between $\pi$
and $\pistar$, hence the optimum is achieved when $\pi = \pistar$.

To derive $\Vstar(s)$ in terms of $\Vstar(s')$, 
the policy $\pistar(a \mid s)$ can be substituted into \eqref{eq:objent}, 
which after some manipulation yields the intuitive definition of optimal state 
value in terms of a softmax (\ie\ log-sum-exp) backup,
%
%
\begin{equation}
\Vstar(s) ~=~ \ento(s, \pistar) ~=~ \tau\log \sum_a \exp\{(r(s,a) + \gamma
\Vstar(s'))/\tau\}~.
\label{eq:vstar}
\end{equation}
Note that in the $\tau\to0$ limit one recovers the hard-max state
values defined in \eqref{eq:hard_state}. 
Therefore we can equivalently state softmax temporal consistency 
in terms of optimal action values $\qs(s, a)$ as,
\begin{equation}
\label{eq:soft}
 \qs(s, a) ~=~ r(s, a) + \gamma \Vstar(s') ~=~ r(s, a) + \gamma\tau \log\sum\nolimits_{a^\prime}
\exp(\qs(s^\prime, a^\prime) / \tau)~.
\end{equation}
Now, much like Q-learning, the consistency equation \eqref{eq:soft} 
can be used to perform one-step backups to asynchronously bootstrap 
$\qs(s, a)$ based on $\qs(s', a')$. 
In 
Appendix~\ref{app:consist} 
we prove that such a procedure,
in the tabular case,
converges to a unique fixed point representing the optimal values.

We point out that the notion of softmax Q-values has been studied 
in previous work (\eg~\cite{ziebart2010modeling, ziebart2008maximum,huang2015approximate,azar,mellowmax,fox}).
Concurrently to our work, \cite{haarnojaetal17} has also proposed a 
soft Q-learning algorithm for continuous control that is based on a 
similar notion of softmax temporal consistency.
However, we contribute new observations below that lead to 
the novel training principles we explore.

\comment{
The entropy regularized
We next present $\ento$ in full generality.
To account for discounting,
we define a $\gamma$-discounted entropy 
\begin{equation*}
\HH{\gamma}(s_1, \pitheta)=-\expected_{s_{1:T}}
\left[\sum_{i=1}^{T-1}\gamma^{i-1} \log\pitheta(a_i|s_i)\right].
\label{eq:entropy}
\end{equation*}

We propose the following objective for optimizing $\pitheta$:
\begin{equation}
\ento(s_0, \theta) = \expected_{s_{0:T}}[ \R{\gamma}(s_{0:T}) ] + \tau \HH{\gamma}(s_0, \pitheta), 
\label{eq:objent}
\end{equation}
where $\R{\gamma}(s_{m:n})=\sum_{i=0}^{n-m-1} \gamma^{i} r(s_{m+i},a_{m+i})$
and $\tau$ is a user-specified temperature parameter.
Optimizing this objective for $s_0$ is equivalent to optimizing 
$\ento(s_1, \theta)$ for all $s_1\in\S$.
Rather than only maximizing the expected sum of future
rewards, this objective maximizes the expected sum
of future discounted rewards and $\tau$-weighted
log-probabilities.  In the case of $\gamma=1$ this is equivalent to the 
entropy regularizer proposed in~\cite{williams1991function}.
\comment{While this objective is only a small modification of 
the maximum-reward objective, it alters the optimal policy $\pistar$ 
from a one-hot distribution on the maximal-reward path to a smoother 
distribution that assigns probabilities to 
all
trajectories commensurate 
with their reward.  }

In our work, we consider an \emph{entropy regularized}
expected reward objective:
\begin{equation}
\ento(\pi) = \sum_{i=1}^n \pi(a_i) (r_i + \gamma \svstar_i - \temp \log\pi(a_i)),
\end{equation}
where we again suppose we have access to the 
$\ento$-optimal state values $\{\svstar, \dots, \svstar_n\}$.
It follows that $\pistar(a_i) \propto \exp\{(r_i + \gamma \svstar_i) /\tau\}$,
which, substituting
back into the objective (after alebraic manipulation), yields
\begin{equation}
\svstar_0 = \ento(\pistar) = \tau\log \sum_{i=1}^n \exp\{(r_i + \gamma
\svstar_i)/\tau\}.
\label{eq:vsimple}
\end{equation}
This gives an
intuitive definition of state values based on a
softmax\footnote{
We use the term {\em softmax} to refer to the log-sum-exp function,
and {\em soft indmax} to refer to the normalized exponential function
corresponding to the (regrettably misnamed) ``softmax activation function''
that produces a vector of multinomial probabilities.
}
function
that in the limit $\tau\to0$ recovers 
the hard-max state values defined above.
}

\comment{
Note that the
preceding logic
can
provide a basis
for an inductive proof of the 
claims that follow
in
a finite horizon setting,
although in the \supp 
our proofs hold for the general infinite horizon
setting.
}

\section{Consistency Between Optimal Value \& Policy}

We now describe the main technical contributions of this paper,
which lead to the development of two novel
off-policy RL algorithms in \secref{sec:alg}.
The first key observation is that,
for the softmax value function $\Vstar$ in \eqref{eq:vstar},
the quantity $\exp\{\Vstar(s) / \tau\}$ also serves as the normalization factor 
of the optimal policy $\pistar(a \mid s)$ in \eqref{eq:pistar}; that is,
\begin{equation}
\pistar(a \mid s) ~=~ \frac{\exp\{(r(s,a) + \gamma \Vstar(s')) /\tau\}}{\exp\{\Vstar(s) / \tau\}} ~.
\label{eq:fraction-pistar}
\end{equation}
Manipulation of \eqref{eq:fraction-pistar} by taking the $\log$ of both sides
then reveals an important connection between the optimal state value 
$\Vstar(s)$,
the value $\Vstar(s')$ 
of the successor state $s'$ reached from \emph{any} action $a$ taken in $s$,
and the corresponding action probability under the 
optimal log-policy, $\log\pistar(a \mid s)$.

\begin{theorem}
  \label{thm:1}
\comment{The optimal policy $\pistar$ for $\ento(s_0, \theta)$ and the
  state values $\vstar$ defined in~\eqref{eq:vstar} satisfy} 
For $\temp\! >\! 0$, the policy $\pistar$ that maximizes $\ento$
and state values $\Vstar(s)\! =\! max_\pi \ento(s, \pi)$ 
satisfy the following temporal consistency property
for any state $s$ and action $a$ (where $s^\prime\!=\!f(s,a)$),
\begin{equation}
\Vstar(s) - \gamma \Vstar(s^\prime) ~=~ r(s, a)-\tau\log\pistar(a \mid s)~.
\label{eq:v pi consistent}
\end{equation}
\end{theorem}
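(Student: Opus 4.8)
The plan is to derive \eqref{eq:v pi consistent} by directly manipulating the characterizations of $\pistar$ and $\Vstar$ already established in \secref{method}, so almost all of the work is done. First I would recall that for $\temp>0$ the maximizer $\pistar(a \mid s)$ of $\ento(s,\pi)$ is the Boltzmann policy \eqref{eq:pistar}, i.e. $\pistar(a \mid s)\propto \exp\{(r(s,a)+\gamma\Vstar(s'))/\temp\}$, with normalization constant $Z(s)=\sum_{a}\exp\{(r(s,a)+\gamma\Vstar(s'))/\temp\}$. Since $\temp>0$, every $\pistar(a \mid s)$ is strictly positive, so $\log\pistar(a \mid s)$ is well defined for \emph{every} action $a$, and the claimed identity will hold for all pairs $(s,a)$, not only those on the optimal path.

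The second step is the key identification: by the softmax backup \eqref{eq:vstar}, $\Vstar(s)=\temp\log Z(s)$, hence $Z(s)=\exp\{\Vstar(s)/\temp\}$. Substituting this normalization constant back into the Boltzmann form yields exactly \eqref{eq:fraction-pistar},
\[
\pistar(a \mid s) ~=~ \frac{\exp\{(r(s,a)+\gamma\Vstar(s'))/\temp\}}{\exp\{\Vstar(s)/\temp\}}~.
\]
Taking logarithms of both sides gives $\log\pistar(a \mid s) = (r(s,a)+\gamma\Vstar(s')-\Vstar(s))/\temp$, and multiplying through by $\temp$ and rearranging produces $\Vstar(s)-\gamma\Vstar(s') = r(s,a)-\temp\log\pistar(a \mid s)$, which is \eqref{eq:v pi consistent}.

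Because these are all one-line algebraic steps, I do not anticipate a genuine obstacle; the only thing requiring care is justifying the two facts I am quoting — that the unconstrained maximizer of the per-state objective \eqref{eq:objent} over $\pi(\cdot \mid s)$ is the Boltzmann policy \eqref{eq:pistar}, and that plugging it back in collapses \eqref{eq:objent} to the log-sum-exp form \eqref{eq:vstar}. If I wanted the theorem to be self-contained I would reprove these by fixing the successor values $\Vstar(s')$ and observing that $\sum_a \pi(a \mid s)[r(s,a)-\temp\log\pi(a \mid s)+\gamma\Vstar(s')]$ is, up to an additive constant and a factor $\temp$, the negative KL divergence from $\pi(\cdot \mid s)$ to the distribution proportional to $\exp\{(r(s,a)+\gamma\Vstar(s'))/\temp\}$ (equivalently, solve the equality-constrained maximization with a Lagrange multiplier for $\sum_a \pi(a \mid s)=1$); this simultaneously identifies \eqref{eq:pistar} as the argmax and its optimal value as $\temp\log\sum_a\exp\{(r(s,a)+\gamma\Vstar(s'))/\temp\}$, i.e. \eqref{eq:vstar}. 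I would also note that the $\Vstar$ appearing here is the fixed point of the softmax backup, whose existence and uniqueness in the tabular case is precisely what is deferred to Appendix~\ref{app:consist}; the theorem simply takes $\Vstar$ and $\pistar$ as given by \eqref{eq:vstar} and \eqref{eq:pistar}. The stochastic-dynamics version goes through along the same route with $\gamma\Vstar(s')$ replaced by the expectation over successor states, as treated in Appendix~\ref{app:consist}.
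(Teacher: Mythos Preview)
Your proposal is correct and mirrors the paper's own derivation: the main body (the discussion around \eqref{eq:fraction-pistar}) gives exactly your argument---identify $\exp\{\Vstar(s)/\tau\}$ as the normalizer of the Boltzmann policy \eqref{eq:pistar} via the softmax backup \eqref{eq:vstar}, take logs, and rearrange. The appendix proof packages the same manipulation as Corollary~\ref{cor:a1}/Corollary~\ref{cor:opt=>consistent} and adds the rigorous groundwork (contraction of $\calB^*$, existence/uniqueness of the fixed point, and the identification $\Vstar=V^\dag$ so that the fixed point really is $\max_\pi\ento(s,\pi)$), which is precisely the piece you flag as ``deferred to Appendix~\ref{app:consist}''.
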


\begin{proof}
\em
All theorems are established for the general case 
of a stochastic environment and discounted infinite horizon problems
in Appendix~\ref{app:consist}.
Theorem~\ref{thm:1} follows as a special case.
\end{proof}

Note that one can also characterize $\pistar$ in terms of $\qs$ as
\begin{equation}
\pistar(a \mid s) ~=~ \exp\{(\qs(s, a) - \vstar(s)) / \tau\}~.
\end{equation}

An important property of the one-step softmax consistency established 
in \eqref{eq:v pi consistent} is that it can be extended to a 
\emph{multi-step} consistency defined on any 
action \emph{sequence} from any given state.
That is, the softmax optimal state values at
the beginning and end of any action sequence can be related to the
rewards and optimal log-probabilities observed along the trajectory.

\begin{corollary}
\label{cor:2}
For $\temp>0$, the optimal policy $\pistar$ and optimal state values $\Vstar$
satisfy the following extended 
temporal consistency property,
for any state $s_1$ and any action sequence $a_1,...,a_{t-1}$
(where $s_{i+1} = f(s_i, a_i)$):
\begin{equation}
  \Vstar(s_1) - \gamma^{t-1}\Vstar(s_t) 
  ~=~ 
  \sum_{i=1}^{t-1} \gamma^{i-1} 
  [r(s_i, a_i) - \temp \log \pistar(a_i \mid s_i)]
~.
\label{eq:pathwise}
\end{equation}
\end{corollary}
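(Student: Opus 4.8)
The plan is to obtain the multi-step consistency in \eqref{eq:pathwise} by summing the one-step identity of Theorem~\ref{thm:1} along the given trajectory, with appropriate discount weights. First I would write down \eqref{eq:v pi consistent} at each state $s_i$ along the sequence, i.e.\ for each $i \in \{1,\dots,t-1\}$,
\begin{equation*}
\Vstar(s_i) - \gamma \Vstar(s_{i+1}) ~=~ r(s_i, a_i) - \temp \log \pistar(a_i \mid s_i)~,
\end{equation*}
which is valid since $s_{i+1} = f(s_i, a_i)$ is exactly the successor reached by taking $a_i$ in $s_i$, matching the hypothesis of Theorem~\ref{thm:1}.

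Next I would multiply the $i$-th equation by $\gamma^{i-1}$ and add the resulting $t-1$ equations. The left-hand side becomes $\sum_{i=1}^{t-1} \gamma^{i-1}\Vstar(s_i) - \sum_{i=1}^{t-1} \gamma^{i}\Vstar(s_{i+1})$; reindexing the second sum as $\sum_{j=2}^{t} \gamma^{j-1}\Vstar(s_j)$ shows it telescopes, leaving $\Vstar(s_1) - \gamma^{t-1}\Vstar(s_t)$. The right-hand side is already in the claimed form $\sum_{i=1}^{t-1}\gamma^{i-1}[r(s_i,a_i) - \temp\log\pistar(a_i\mid s_i)]$, so the corollary follows. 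One can also phrase this as a trivial induction on $t$: the base case $t=2$ is Theorem~\ref{thm:1}, and the inductive step applies Theorem~\ref{thm:1} at $s_{t}$ to the term $\gamma^{t-1}\Vstar(s_{t})$ and folds it into the sum.

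There is essentially no obstacle here: the only thing to be careful about is the bookkeeping of discount factors and the index shift in the telescoping sum, and (as the authors note for Theorem~\ref{thm:1} itself) the fact that the underlying one-step identity is really proved in full generality in Appendix~\ref{app:consist}, with the deterministic-dynamics statement here being the specialization. So the ``hard part'' is entirely upstream, in establishing Theorem~\ref{thm:1}; once that is in hand, Corollary~\ref{cor:2} is a one-line telescoping argument.
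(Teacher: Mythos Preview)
Your proposal is correct and matches the paper's own argument essentially line for line: the paper (both in the main-text sketch and in the Appendix proof) applies the one-step identity of Theorem~\ref{thm:1} at each $s_i$, weights by $\gamma^{i-1}$, and sums so that the intermediate $\Vstar(s_i)$ terms cancel telescopically. Your remark that the real content lies upstream in Theorem~\ref{thm:1}, with the deterministic statement here being a specialization of the general stochastic result in the appendix, is also exactly how the paper frames it.
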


\begin{proof}
\em
The proof in 
Appendix~\ref{app:consist} 
applies 
(the generalized version of) \theoref{thm:1} to any $s_1$ 
and sequence $a_1,...,a_{t-1}$,
summing the left and right hand sides of (the generalized version of)
\eqref{eq:v pi consistent} 
to induce telescopic cancellation of intermediate state values.
Corollary~\ref{cor:2} follows as a special case.
\end{proof}

Importantly, the converse of~\theoref{thm:1} (and Corollary~\ref{cor:2})
also holds:

\begin{theorem}
\label{thm:2}
If a policy $\pi(a\mid s)$ and state value function $V(s)$
satisfy the consistency property \eqref{eq:v pi consistent}
for all states $s$ and actions $a$ (where $s^\prime = f(s,a)$),
then $\pi=\pistar$ and $V=\vstar$.
(See 
Appendix~\ref{app:consist}.)
\end{theorem}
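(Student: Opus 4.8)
The plan is to show that the consistency system \eqref{eq:v pi consistent} pins down $V$ and $\pi$ uniquely, so that since $(\vstar,\pistar)$ is one solution (by \theoref{thm:1}), it must be the only one. First I would use the fact that $\pi(\cdot\mid s)$ is a probability distribution: summing or manipulating \eqref{eq:v pi consistent} over actions must recover a self-contained equation for $V$. Concretely, rewrite \eqref{eq:v pi consistent} as $\tau\log\pi(a\mid s) = r(s,a) + \gamma V(s') - V(s)$, i.e.
\begin{equation}
\pi(a\mid s) ~=~ \exp\{(r(s,a) + \gamma V(f(s,a)) - V(s))/\tau\}~.
\label{eq:thm2-pi}
\end{equation}
Now impose $\sum_a \pi(a\mid s) = 1$. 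This yields $\exp\{V(s)/\tau\} = \sum_a \exp\{(r(s,a)+\gamma V(f(s,a)))/\tau\}$, i.e. $V$ satisfies exactly the softmax Bellman equation \eqref{eq:vstar}. Then I would invoke the contraction/fixed-point result cited for \eqref{eq:soft} (proved in Appendix~\ref{app:consist}): the softmax Bellman backup is a $\gamma$-contraction in the sup-norm, so it has a unique fixed point, which is $\vstar$. Hence $V=\vstar$, and substituting back into \eqref{eq:thm2-pi} gives $\pi(a\mid s) = \exp\{(r(s,a)+\gamma\vstar(s') - \vstar(s))/\tau\} = \pistar(a\mid s)$ by \eqref{eq:fraction-pistar}.

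The one subtlety I would be careful about is the direction of the logic: \eqref{eq:v pi consistent} only constrains $\pi$ on the actions actually realizable, but since we assume it holds for \emph{all} $s$ and $a$, and $\pi(\cdot\mid s)$ is assumed to be a genuine distribution, the normalization step is valid with no loss. A second point is that in the infinite-horizon discounted setting (the general case handled in the appendix) one should confirm the fixed point is unique over the relevant class of bounded value functions — this is precisely what the contraction argument delivers, provided $\gamma<1$ or an appropriate absorbing-state/episodic assumption is in force, mirroring the standard Q-learning convergence argument referenced after \eqref{eq:soft}. In the stochastic-dynamics generalization, $f(s,a)$ is replaced by an expectation over successor states inside the exponent's argument via the generalized consistency of Appendix~\ref{app:consist}, but the structure of the argument — (i) solve for $\pi$ in terms of $V$, (ii) normalize to get a Bellman equation for $V$ alone, (iii) apply uniqueness of the fixed point, (iv) back-substitute — is unchanged.

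The main obstacle, such as it is, is step (iii): establishing (or here, citing) that the softmax Bellman operator $T^*$ has a unique fixed point. Everything else is elementary algebra driven by the normalization constraint. Since the paper defers the contraction proof to Appendix~\ref{app:consist} and I am permitted to assume earlier results, the theorem reduces to the short chain above, and I would present it in that order, noting that \theoref{thm:1} supplies existence of the solution $(\vstar,\pistar)$ so that the uniqueness argument actually identifies \emph{which} fixed point $(\pi,V)$ equals.
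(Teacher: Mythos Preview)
Your proposal is correct and follows essentially the same route as the paper: the appendix proof (Corollary~\ref{cor:consistent=>opt}) also shows that the consistency equations force $V$ to be a fixed point of the softmax Bellman operator $\calB^*$, then invokes the contraction/uniqueness result (Lemma~\ref{lem:fixed point}) to conclude $V=\Vstar$ and hence $\pi=\pistar$. The only cosmetic difference is that the paper packages your normalization step as a one-shot lemma (Corollary~\ref{cor:a2}, derived via the KKT conditions of the entropy-regularized problem), whereas you carry out the sum $\sum_a\pi(a\mid s)=1$ directly.
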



\theoref{thm:2} motivates the use of one-step and multi-step
path-wise consistencies as the foundation of RL algorithms that
aim to learn parameterized policy and value estimates by minimizing
the discrepancy between the left and right hand sides of 
\eqref{eq:v pi consistent} and \eqref{eq:pathwise}.

\section{\longname (\shortname)}
\label{sec:alg}

The temporal consistency properties
between the optimal policy and optimal state values developed above 
lead to a natural path-wise objective for training a policy 
$\pi_\theta$, parameterized by $\theta$, and a state value function $\Vphi$,
parameterized by $\phi$,
via the minimization of a soft consistency error. 
Based on \eqref{eq:pathwise}, we first define a notion of soft
consistency for a $d$-length sub-trajectory $s_{i:i+d}\equiv(s_i, a_i, \ldots,
s_{i+d-1}, a_{i+d-1}, s_{i+d})$ as a function of $\theta$ and $\phi$:
\begin{equation}
\CC{\gamma}(s_{i:i+d}, \theta, \phi) = 
-\Vphi(s_i) + \gamma^{d}\Vphi(s_{i+d})
+ \sum\nolimits_{j=0}^{d-1} \gamma^{j} [r(s_{i+j}, a_{i+j}) - \temp \log \pi_\theta(a_{i+j} \mid s_{i+j})]~.
\end{equation}
\comment{ This definition may be extended to trajectories that
  terminate before step $t$ by considering all rewards and
  log-probabilities after a terminal state as $0$.}  
The goal of a learning algorithm can then be to find $\Vphi$ and $\pi_\theta$ 
such that $\CC{\gamma}(s_{i:i+d}, \theta, \phi)$ is as close to $0$ as possible
for all sub-trajectories $s_{i:i+d}$. 
Accordingly, we propose a new learning algorithm, 
called \emph{\longname (\shortname)}, 
that attempts to minimize the squared soft consistency error 
over a set of sub-trajectories $\E$,
\begin{equation}
\pclo(\theta,\phi) = \sum_{s_{i:i+d} \in \E}\frac{1}{2}\CC{\gamma}(s_{i:i+d},{\theta},{\phi})^2.
\label{eq:objcac}
\end{equation}

The \shortname update rules for $\theta$ and $\phi$ are derived by 
calculating the gradient of~\eqref{eq:objcac}. 
For a given trajectory $s_{i:i+d}$ these take the form,
\begin{eqnarray}
\label{eq:piupd}
\Delta\theta &=& \eta_\pi\,\CC{\gamma}(s_{i:i+d},{\theta},{\phi}) \, \sum\nolimits_{j=0}^{d-1} \gamma^{j}  \nabla_\theta \log \pi_\theta(a_{i+j} \mid s_{i+j})~,\\
\label{eq:phiupd}
\Delta\phi &=& \eta_v\,\CC{\gamma}(s_{i:i+d},{\theta},{\phi}) \left(\nabla_\phi\Vphi(s_i)
  - \gamma^{d}\nabla_\phi\Vphi(s_{i+d})\right)~,
\end{eqnarray}
where $\eta_v$ and $\eta_\pi$ denote the value and policy learning rates
respectively. 
Given that the consistency property must hold on \emph{any} path, 
the \shortname algorithm applies the updates 
\eqref{eq:piupd} and \eqref{eq:phiupd} both
to trajectories sampled on-policy from $\pi_\theta$ 
as well as trajectories sampled from a replay buffer. 
The union of these trajectories comprise the set $\E$ used in
$\eqref{eq:objcac}$ to define $\pclo$.

Specifically, given a fixed rollout parameter $d$, at each iteration, 
\shortname samples a batch of on-policy trajectories and computes the 
corresponding
parameter updates for each sub-trajectory of length $d$.  
Then \shortname exploits off-policy trajectories by maintaining a replay 
buffer and applying additional updates based on a batch of episodes sampled
from the buffer at each iteration.
We have found it beneficial to sample replay episodes proportionally
to exponentiated reward, mixed with a
uniform distribution, although we did not exhaustively experiment with
this sampling procedure. 
In particular, we sample a full episode
$s_{0:T}$ from the replay buffer of size $B$ with probability $0.1/B +
0.9\cdot\exp(\alpha \sum_{i=0}^{T-1}r(s_i, a_i))/Z$, where we use no discounting
on the sum of rewards, $Z$ is a normalization factor, and $\alpha$ is a
hyper-parameter. Pseudocode of \shortname is provided in the
\supp.

We note that in stochastic settings, our 
squared inconsistency objective approximated by Monte Carlo samples
is a biased estimate of the true squared inconsistency 
(in which an expectation over stochastic dynamics occurs inside rather
than outside the square).
This issue arises in Q-learning as well, and others have proposed possible
remedies which can also be applied to \shortname~\cite{antos2008learning}.

\comment{While we focused our experiments on environments with relatively
short episodes (length at most $100$), in environments with longer
episodes the algorithm may be altered to be applied on sub-episodes
of manageable length.  It may also be beneficial to use multiple 
rollout lengths $d$ and optimize consistency at multiple scales,
although we did not explore this.}

\subsection{\longuname (\shortuname)}

The \shortname algorithm maintains a separate model for the policy and
the state value approximation.
However, given the soft consistency between the state and action value 
functions (\eg in \eqref{eq:soft}),
one can express the soft consistency errors strictly in terms of Q-values. 
Let $\Qrho$ denote a model of action values parameterized by
$\rho$, based on which one can estimate both the state values and the policy as,
\begin{eqnarray}
\Vrho(s) &=& \tau \log \sum\nolimits_a \exp\{\Qrho(s, a) / \tau\}~,\\
\pirho(a \mid s) &=& \exp\{ (\Qrho(s, a) - \Vrho(s)) / \tau\}~.
\end{eqnarray}
Given this unified parameterization of policy and value, 
we can formulate an alternative algorithm,
called \emph{\longuname (\shortuname)}, 
which optimizes the same objective (\ie~\eqref{eq:objcac}) as \shortname 
but differs by combining the policy and value function into a single model. 
Merging the policy and value function models in this way is
significant because it presents a new actor-critic paradigm where
the policy (actor) is not distinct from the values (critic).  
We note that in practice, we have found it beneficial to apply updates to $\rho$
from $\Vrho$ and $\pirho$ using different learning rates, very much
like \shortname.  Accordingly, the update rule for $\rho$ takes the form,
\begin{eqnarray}
\Delta\rho &=& 
\eta_\pi \CC{\gamma}(s_{i:i+d},\rho) \,
\sum\nolimits_{j=0}^{d-1} \gamma^{j} \nabla_\rho \log \pi_\rho(a_{i+j} \mid s_{i+j}) + \\
& & \eta_v \CC{\gamma}(s_{i:i+d},\rho)\left(\nabla_\rho\Vrho(s_i) - \gamma^{d}\nabla_\rho\Vrho(s_{i+d})\right)~.
\label{eq:rhoupd}
\end{eqnarray}

\subsection{Connections to Actor-Critic and Q-learning}

To those familiar with advantage-actor-critic 
methods~\citep{mnih2016asynchronous} (A2C and its asynchronous analogue A3C) 
\shortname's update rules might appear to be similar.
In particular, advantage-actor-critic is an on-policy method that exploits
the expected value function,
\begin{equation}
\vpi(s) ~=~ \sum\nolimits_a \pi(a\mid s) \,[r(s, a) + \gamma\vpi(s')]~,
\end{equation}
to reduce the variance of policy gradient, 
in service of maximizing the expected reward.
As in \shortname, two models are trained concurrently: 
an actor $\pitheta$ that determines the policy, 
and a critic $\Vphi$ that is trained to estimate $V^{\pitheta}$.  
A fixed rollout parameter $d$ is chosen, 
and the advantage of an on-policy trajectory $s_{i:i+d}$ is estimated by 
\begin{equation}
  \Ad{\gamma}(s_{i:i+d},\phi) = 
- \Vphi(s_{i}) + \gamma^d\Vphi(s_{i+d}) 
+ \sum\nolimits_{j=0}^{d-1} \gamma^{j} r(s_{i+j},a_{i+j})~.
\end{equation}
The advantage-actor-critic updates for $\theta$ and $\phi$ can 
then be written as,
\begin{eqnarray}
\Delta\theta &=& \eta_\pi \expected_{s_{i:i+d} | \theta}
\left[\Ad{\gamma}(s_{i:i+d},{\phi})\nabla_\theta\log\pitheta(a_i|s_i)\right]~,\\
\Delta\phi &=& \eta_v \expected_{s_{i:i+d} | \theta}
\left[\Ad{\gamma}(s_{i:i+d},{\phi})\nabla_\phi\Vphi(s_i)\right]~,
\label{eq:piupd2}
\end{eqnarray}
where the expectation $\expected_{s_{i:i+d} | \theta}$ denotes sampling
from the current policy~$\pitheta$.
These updates exhibit a striking similarity to the updates expressed in
~\eqref{eq:piupd} and ~\eqref{eq:phiupd}.
In fact, if one takes \shortname with $\tau\to0$ and omits the replay buffer, 
a slight variation of A2C is recovered.  
In this sense, one can interpret \shortname as a generalization of A2C.
Moreover, while A2C is restricted to on-policy samples,
\shortname minimizes an inconsistency measure that is defined on any path, 
hence it can exploit replay data to enhance its efficiency
via off-policy learning.

It is also important to note that for A2C, 
it is essential that $\Vphi$ tracks the non-stationary target $V^{\pitheta}$ 
to ensure suitable variance reduction.
In \shortname, no such tracking is required.
This difference is more dramatic in \shortuname,
where a single model is trained both as an actor and a critic.
That is, it is not necessary to have a separate actor and critic; 
the actor itself can serve as its own critic.

One can also compare \shortname to hard-max temporal consistency RL algorithms,
such as Q-learning~\citep{qlearning}.
In fact, setting the rollout to $d=1$ in \shortuname leads to a form of 
soft Q-learning, with the degree of softness determined by $\tau$.
We therefore conclude that the path consistency-based algorithms
developed in this paper also generalize Q-learning.  
Importantly, \shortname and \shortuname
are not restricted to single step consistencies, 
which is a major limitation of Q-learning.
While some have proposed using multi-step backups for
hard-max Q-learning~\citep{peng1996incremental, mnih2016asynchronous},
such an approach is not theoretically sound, 
since the rewards received after a non-optimal action do not relate to the 
hard-max Q-values $\qop$.
Therefore, one can interpret the notion of temporal consistency proposed
in this paper as a sound generalization of the one-step temporal consistency
given by hard-max Q-values.  

\comment{

\begin{equation}
  \ento(\pi, s, \temp) ~=~ \sum_a \pi(a \mid s)\,\expected_{(s',\,r)}\,
       [r + \gamma \Vstar(s') - \temp \log\pi(a\mid s)]~,
\end{equation}

\begin{equation}
\Vstar(s) = \ento(\pistar, s, \temp) = \tau\log \sum_a \exp\{\,\expected_{(s',\,r)}\,[r + \gamma
\Vstar(s')]/\tau\,\}
.
\label{eq:vsimple}
\end{equation}

As a slight abuse of notation and because actions directly correspond to states
in deterministic environments, we will denote a trajectory in the graph as
$s_{1:t} \equiv (s_1,a_1,\dots,a_{t-1},s_t)$.  We denote a trajectory of length
$t-1$ sampled from $\pitheta$ starting at $s_1$ (not necessarily the initial
state $s_0$) as $s_{1:t}\sim\pitheta(s_{1:t})$.  Note that $\pitheta(s_{1:t}) =
\prod_{i=1}^{t-1}\pitheta(a_i \mid s_i)$.  A trajectory sampled from $\pitheta$
at $s_1$, continuing until termination at $s_T$, is denoted
$s_{1:T}\sim\pitheta(s_{1:})$.  When used in an expectation over paths, we omit
the sampling distribution $\pitheta(\cdot)$ for brevity, but unless otherwise
specified, all expectations are with respect to trajectories sampled from
$\pitheta$.

We begin our formulation of softmax temporal consistency with a
simple
motivating example.
Suppose an agent is at some state $s_0$
and faces $n$ possible actions $\{a_1,\dots, a_n\}$,
each yielding an immediate reward 
in
$\{r_1,\dots,r_n\}$ 
and leading to a successor state
in
$\{s_1, \dots, s_n\}$,
where each successor has an associated estimate
of future value (\ie\ state values)
$\{v_1, \dots, v_n\}$. 
Consider the problem of inferring the current state value $v_0$ 
assuming that a policy $\pi$
has been locally
optimized
for the next action choice.
In particular,
we face a problem of optimizing $\pi$
subject to $0 \leq \pi(a_i) \leq 1$ and $\sum_i \pi(a_i) = 1$ 
to maximize some objective.
As we will see, different choices 
of 
objective
lead to different forms of temporal consistency 
defining 
$v_0$.

First, consider the
standard
expected reward objective:
\begin{equation}
\mro(\pi) = \sum_{i=1}^n \pi(a_i) (r_i + \gamma \svop_i)
,
\end{equation}
where we
suppose we have access to the
$\mro$-optimal state values 
at the successors
$\{\svop_1, \dots, \svop_n\}$. 
In this context, the optimal policy $\piop$ is a one-hot distribution
with a probability of $\piop(a_{m}) = 1$ at an action $a_{m}$ with
maximal return, \ie~$m = \operatorname{argmax}_i (r_i + \gamma
\svop_i)$, and zero elsewhere. Accordingly, the $\mro$-optimal state
value of $s_0$ based on $\{\svop_1, \dots, \svop_n\}$ is given by
\begin{equation}
\svop_0 = \mro(\piop) = \max_i (r_i + \gamma \svop_i).
\end{equation}
Thus, we have derived the hard-max
temporal consistency property introduced in \eqref{eq:hard2}.
}

\section{Related Work}
\label{related}

Connections between softmax Q-values and optimal entropy-regularized 
policies have been previously noted.
In some cases entropy regularization is expressed in the form
of relative entropy \citep{azaretal11,azaretal12,fox,schulmanetal17},
and in other cases it is the standard entropy~\citep{ziebart2010modeling}.
While these papers derive similar relationships to 
\eqref{eq:pistar} and \eqref{eq:vstar},
they stop short of stating the single- and multi-step consistencies
over all action choices we highlight.
Moreover, the algorithms proposed in those works
are essentially single-step Q-learning variants,
which suffer from the limitation of using single-step backups.
Another recent work~\citep{pgq2017}
uses the softmax relationship in the limit of $\tau\to0$
and proposes to
augment an actor-critic algorithm with offline updates that minimize
a set of single-step hard-max Bellman errors. 
Again, the methods we propose are differentiated by
the multi-step path-wise consistencies which allow the resulting
algorithms to utilize multi-step trajectories from off-policy
samples in addition to on-policy samples.

The proposed \shortname and \shortuname algorithms bear some similarity to
multi-step Q-learning~\citep{peng1996incremental}, which rather than minimizing
one-step hard-max Bellman error, optimizes a Q-value function approximator by
unrolling the trajectory for some number of steps before using a hard-max
backup.  While this method has shown some empirical
success~\cite{mnih2016asynchronous}, its theoretical justification is lacking,
since rewards received after a non-optimal action no longer relate to the 
hard-max Q-values $\qop$. 
In contrast, the algorithms we propose incorporate the
log-probabilities of the actions on a multi-step rollout, 
which is crucial for the version of softmax consistency we consider.

Other notions of temporal consistency similar to softmax consistency have
been discussed in the RL literature. 
Previous work has used a {\em Boltzmann weighted average} operator
\cite{littman,azar}. 
In particular, this
operator has been used by~\cite{azar} to propose an iterative algorithm
converging to the optimal maximum reward policy inspired by the work of
\cite{kappen2005path, todorov2006linearly}. While they use the Boltzmann
weighted average, they briefly mention that a softmax (log-sum-exp) operator
would have similar theoretical properties.  More recently~\cite{mellowmax}
proposed a mellowmax operator, defined as log-{\em average}-exp. These
log-average-exp operators share a similar non-expansion property, and
the proofs of non-expansion are related.  
Additionally it is possible to show
that when restricted to an infinite horizon setting, the fixed point of the
mellowmax operator is a constant shift of the $\qs$
investigated here.  
In all these cases,
the suggested training algorithm optimizes a single-step consistency unlike
\shortname and \shortuname, which optimizes a multi-step consistency.
Moreover, these papers do not present a clear relationship between the action
values at the fixed point and the entropy regularized expected reward
objective, which was key to the formulation and algorithmic development
in this paper.

Finally, there has been a considerable amount of work in reinforcement
learning using off-policy data to design more sample efficient
algorithms. Broadly speaking, these methods can be understood as trading off
bias \citep{sutton1999policy, silver14ddpg, lillicrap2015continuous,
  gu2016deep} and variance \citep{precup2000eligibility,
  munos2016safe}. Previous work that has considered multi-step
off-policy learning has typically used a correction (\eg~via
importance-sampling \citep{precup2001off} or truncated importance
sampling with bias correction \citep{munos2016safe}, or eligibility
traces \citep{precup2000eligibility}). By contrast, our method defines
an unbiased consistency for an entire trajectory applicable to on- and
off-policy data. 
An empirical comparison with all these methods 
remains however an interesting avenue for future work.

\comment{
There has been a surge of recent interest in using neural networks
for both value and policy based reinforcement learning~
(\eg~\cite{atari, trpo2015, levine2016end,
silveretal16}). We highlight several lines of work that are most relevant
to this paper.

The key idea of including a maximum entropy regularizer to encourage
exploration is common in RL~\citep{williams1991function,
mnih2016asynchronous} and inverse RL
\citep{ziebart2008maximum}. Our proposed {\em discounted} entropy penalty
generalizes the approach originally proposed in
\cite{williams1991function} beyond $\gamma=1$, enabling applicability
to infinite horizon problems. Previous work has extensively studied other
exploration strategies including predictive
error~\citep{stadie2015incentivizing}, count based
exploration~\citep{bellemare2016unifying}, information theoretic
notions of curiosity~\citep{singh2004intrinsically,
  schmidhuber2010formal}, and under-appreciated reward exploration
\citep{urex}. We note that these methods often modify the reward
function of an underlying MDP to include an {\em exploration bonus}.
This can be easily coupled with our approach here.

Our proposed \shortname algorithm bears some similiarity to
multi-step Q-learning~\citep{peng1996incremental}, which rather than minimizing
one-step hard-max Bellman error,
optimizes a Q-value function approximator by unrolling
the trajectory for some number of steps before using a hard-max
backup.  While this method has shown some empirical
success~\cite{mnih2016asynchronous}, its theoretical justification is lacking,
since rewards received after a non-optimal action do not relate to the
hard-max Q-values $\qop$ anymore. On the other hand, our algorithm 
incorporates the log-probabilities of the actions on a multi-step rollout, 
which is crucial for our softmax consistency.
}

\comment{
They further show that
a maximum-entropy policy consistent with the 
mellowmax Q-values must necessarily take the form of
the soft indmax of scaled mellowmax Q-values.
However, unlike in our work, they do not relate
this policy to the entropy regularized maximum reward objective.
}

\section{Experiments}
\label{exp}

We evaluate the proposed algorithms, namely \shortname \& \shortuname, across several different tasks
and compare them to an A3C implementation, 
based on~\cite{mnih2016asynchronous},
and an implementation of double Q-learning with prioritized experience replay,
based on~\cite{pdqn}. 
We find that \shortname can consistently
match or beat the performance of these baselines.  
We also provide a comparison between \shortname 
and \shortuname and find that the use of a single
unified model for both values and policy can be competitive
with \shortname.

These new algorithms are easily amenable to incorporate expert
trajectories.  Thus, for the more difficult tasks we also experiment
with seeding the replay buffer with $10$ randomly sampled expert trajectories.
During training we ensure that these trajectories are not removed from 
the replay buffer and always have a maximal priority.

The details of the tasks and 
the experimental setup are provided in the \supp.

\begin{figure}[t!]
\begin{center}
  \begin{tabular}{@{}c@{}c@{}c@{}c@{}}
    \tiny Synthetic Tree & \tiny Copy & \tiny DuplicatedInput & \tiny RepeatCopy \\
    \includegraphics[width=0.21\columnwidth]{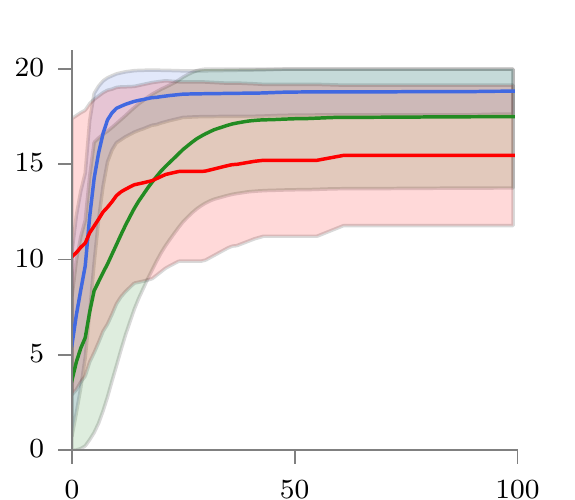} &
    \includegraphics[width=0.21\columnwidth]{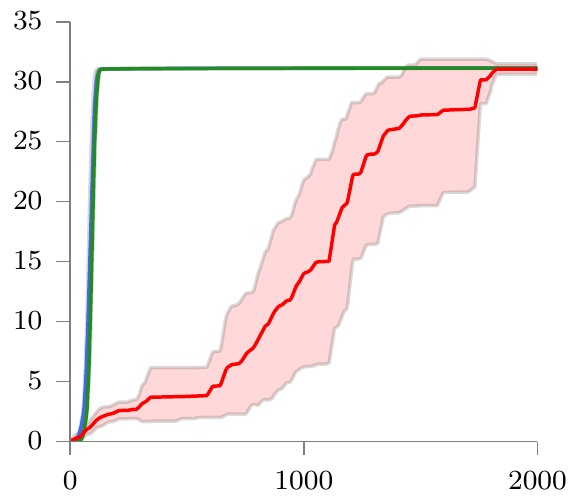} &
    \includegraphics[width=0.21\columnwidth]{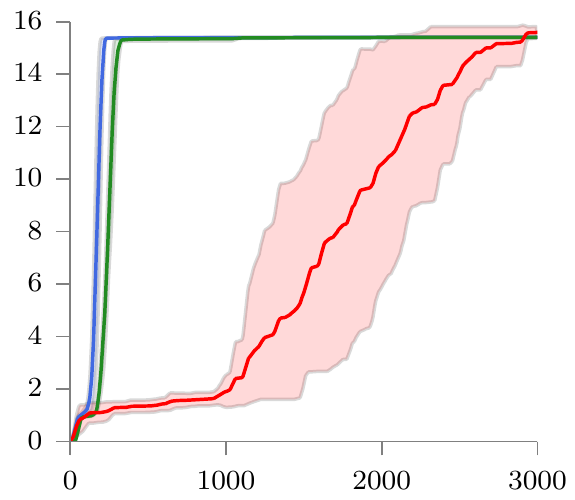} &
    \includegraphics[width=0.21\columnwidth]{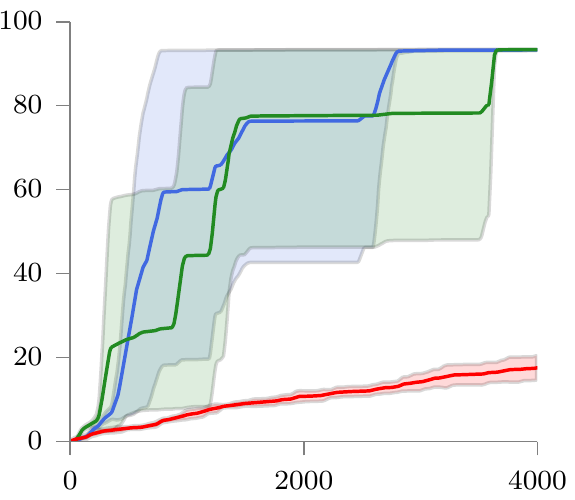} \\
    \tiny Reverse & \tiny ReversedAddition & \tiny ReversedAddition3 & \tiny Hard ReversedAddition \\
    \includegraphics[width=0.21\columnwidth]{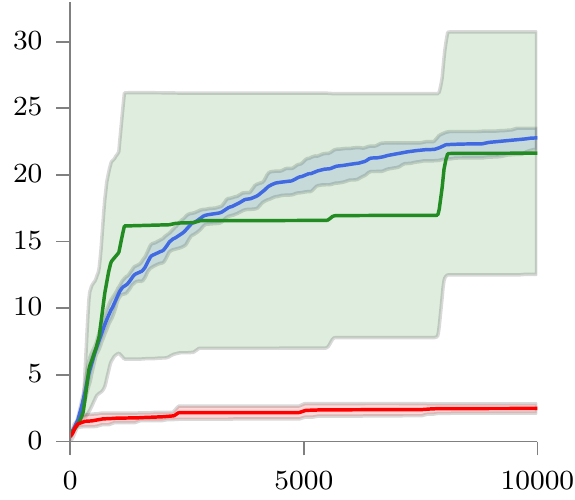} &
    \includegraphics[width=0.21\columnwidth]{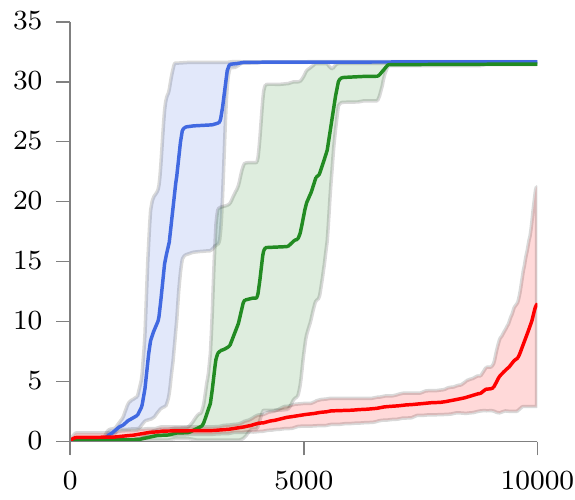} &
    \includegraphics[width=0.21\columnwidth]{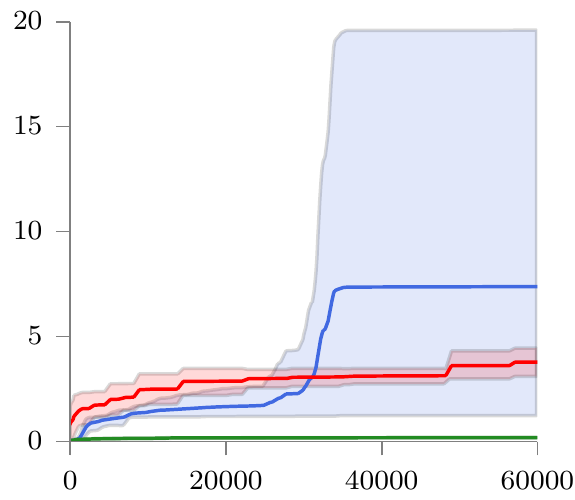} &
    \includegraphics[width=0.21\columnwidth]{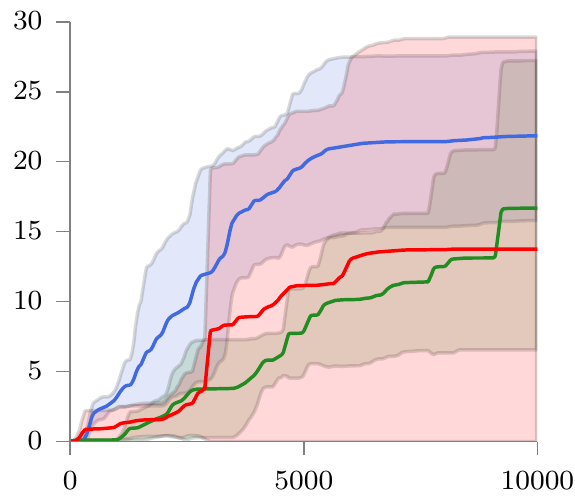} \\
    \multicolumn{4}{c}{\includegraphics[width=0.3\columnwidth]{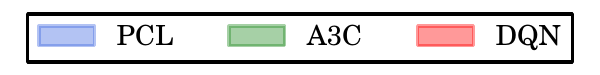}}
  \end{tabular}
\end{center}
\caption{
The results of \shortname against A3C and DQN baselines.
Each plot shows average reward across $5$ random training runs 
($10$ for Synthetic Tree) after choosing best hyperparameters.
We also show a single 
standard deviation bar clipped at the min and max.
The x-axis is number of training iterations.
\shortname exhibits comparable performance to A3C in some tasks,
but clearly outperforms A3C on the more challenging
tasks.  Across all tasks, the performance of DQN is worse than
\shortname.  
}
\label{fig:results}
\vspace{-0.4cm}
\end{figure}

\subsection{Results}
We present the results of each of the variants \shortname, A3C, and DQN
in~\figref{fig:results}.  After finding the best hyperparameters (see 
Section \ref{sec:implementation}), 
we plot the average reward over training iterations
for five randomly seeded runs. For the Synthetic Tree environment, the same
protocol is performed but with ten seeds instead.

The gap between \shortname and A3C is hard to discern in
some of the more simple tasks such as Copy, Reverse,
and RepeatCopy. However, a noticeable gap is observed in the
Synthetic Tree and DuplicatedInput results and more significant gaps
are clear in the harder tasks, including 
ReversedAddition, ReversedAddition3, and Hard ReversedAddition.
Across all of the experiments, it is clear that the prioritized DQN
performs worse than \shortname.
These results suggest that \shortname is a competitive  
RL algorithm, which in some cases
significantly outperforms strong baselines.

\begin{figure}[t!]
\begin{center}
  \begin{tabular}{@{}c@{}c@{}c@{}c@{}}
    \tiny Synthetic Tree & \tiny Copy & \tiny DuplicatedInput & \tiny RepeatCopy \\
    \includegraphics[width=0.21\columnwidth]{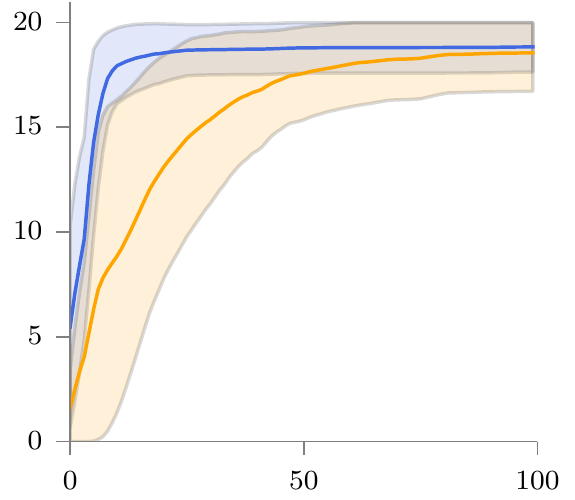} &
    \includegraphics[width=0.21\columnwidth]{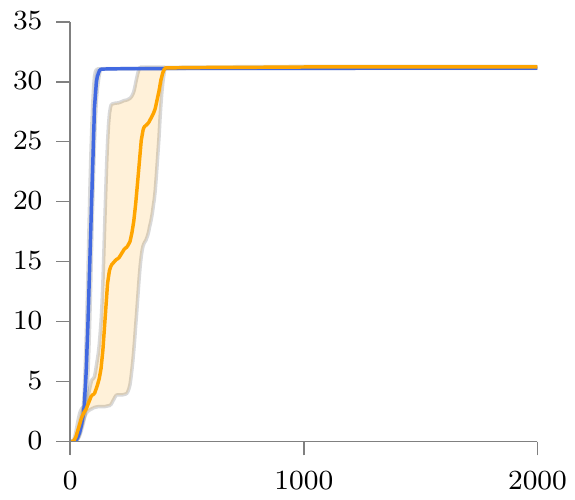} &
    \includegraphics[width=0.21\columnwidth]{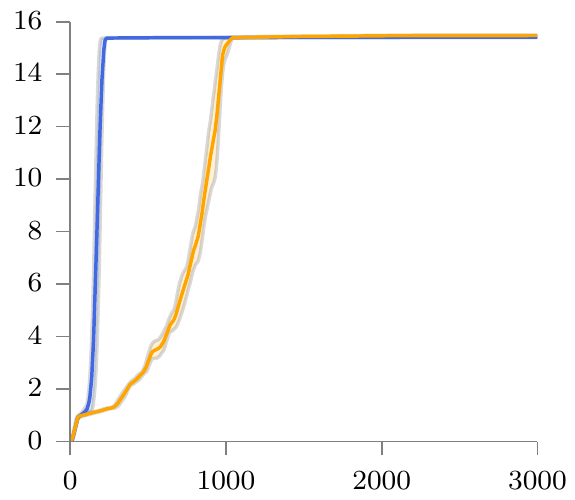} &
    \includegraphics[width=0.21\columnwidth]{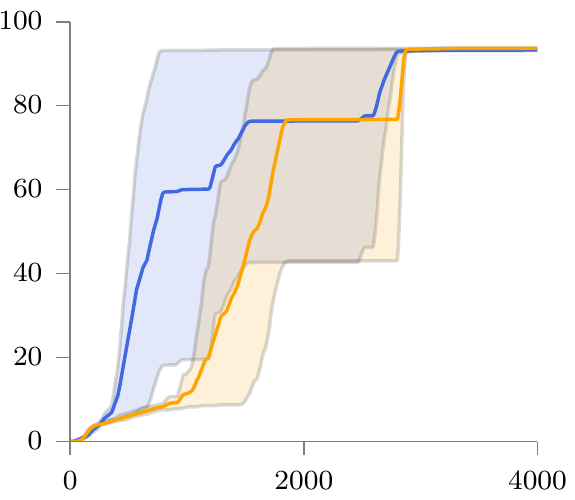} \\
    \tiny Reverse & \tiny ReversedAddition & \tiny ReversedAddition3 & \tiny Hard ReversedAddition \\
    \includegraphics[width=0.21\columnwidth]{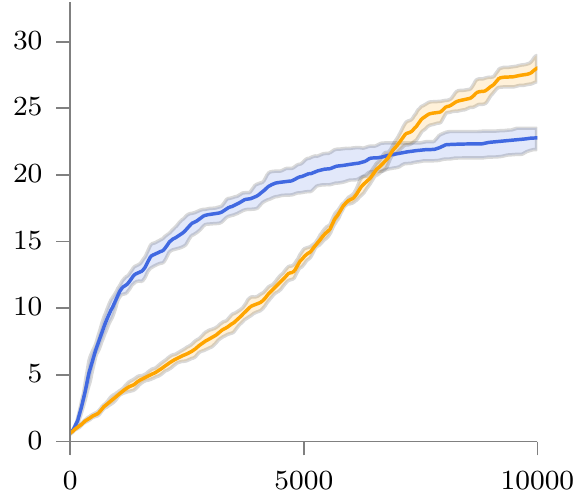} &
    \includegraphics[width=0.21\columnwidth]{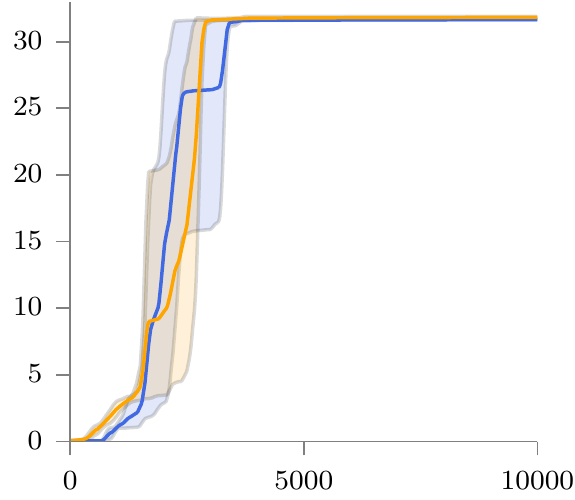} &
    \includegraphics[width=0.21\columnwidth]{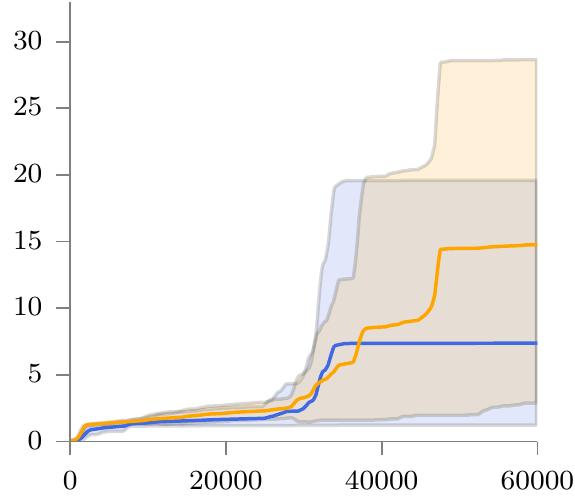} &
    \includegraphics[width=0.21\columnwidth]{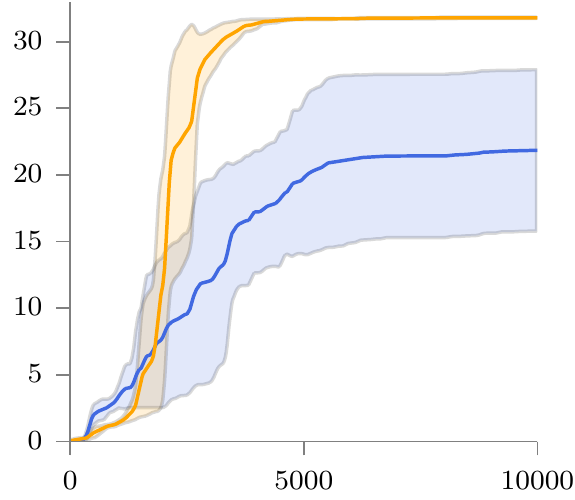} \\
    \multicolumn{4}{c}{\includegraphics[width=0.3\columnwidth]{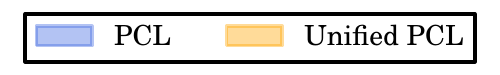}}
  \end{tabular}
\end{center}
\caption{
The results of \shortname~\vs~\shortuname.
Overall we find that using a single model for both values and
policy is not detrimental to training.  Although in some 
of the simpler tasks \shortname has an edge over \shortuname,
on the more difficult tasks, \shortuname preforms better.
}
\label{fig:uniresults}
\vspace{-0.4cm}
\end{figure}

We compare \shortname to \shortuname
in~\figref{fig:uniresults}.  The same protocol is performed
to find the best hyperparameters and plot the average reward over several
training iterations.  
We find that using a single model for both values and policy in \shortuname 
is slightly detrimental on the simpler tasks, 
but on the more difficult tasks \shortuname
is competitive or even better than \shortname.

\begin{figure}[t]
\begin{center}
  \begin{tabular}{@{}c@{}c@{}c@{}c@{}}
    \tiny Reverse & \tiny ReversedAddition & \tiny ReversedAddition3 & \tiny Hard ReversedAddition \\
    \includegraphics[width=0.21\columnwidth]{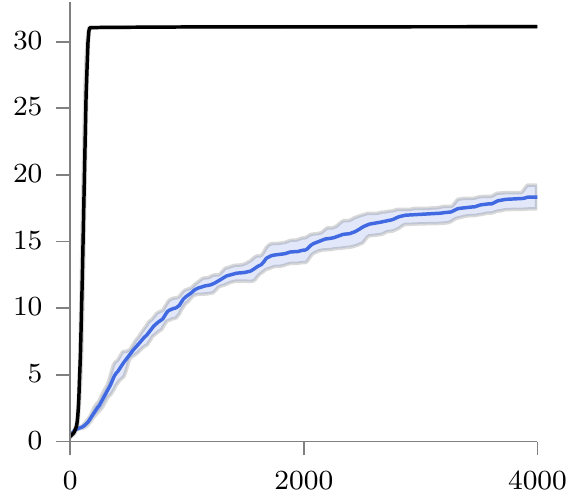} &
    \includegraphics[width=0.21\columnwidth]{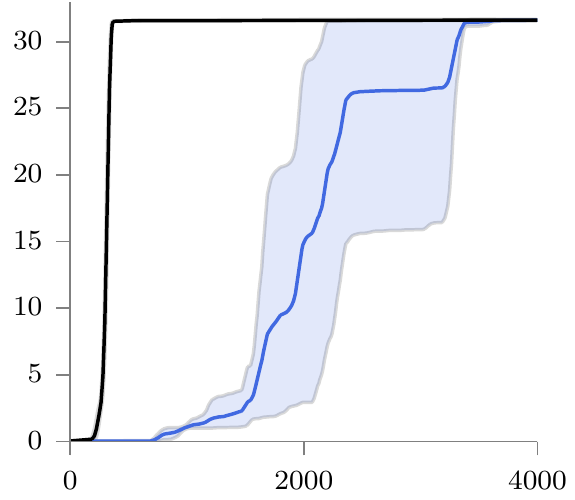} &
    \includegraphics[width=0.21\columnwidth]{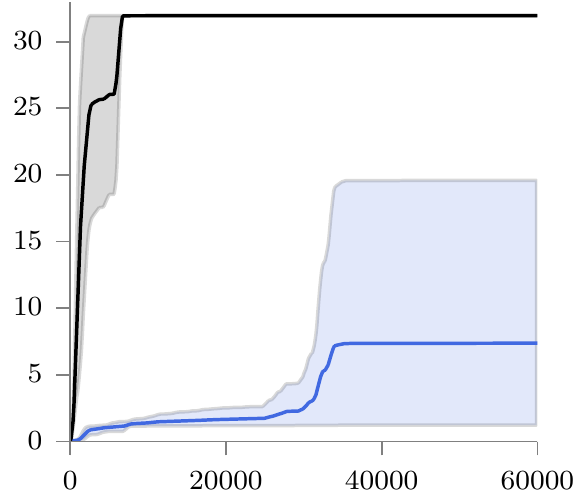} &
    \includegraphics[width=0.21\columnwidth]{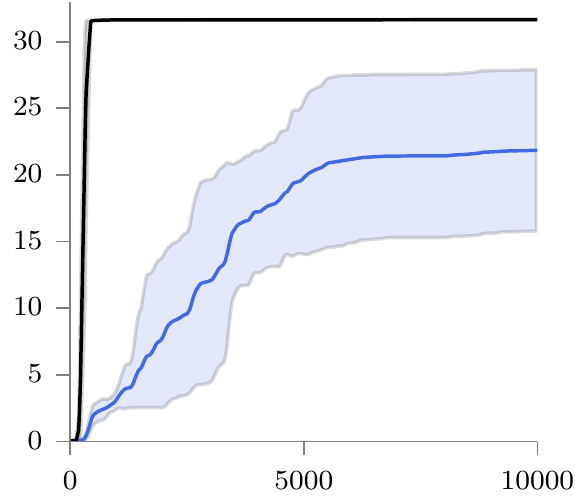} \\
    \multicolumn{4}{c}{\includegraphics[width=0.3\columnwidth]{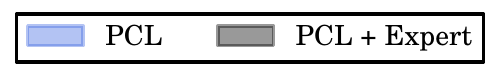}}
  \end{tabular}
\end{center}
\caption{
The results of \shortname~\vs~\shortname augmented with
a small number of expert trajectories on the hardest algorithmic tasks.
We find that incorporating expert trajectories
greatly improves performance.
}
\label{fig:exp_results}
\vspace{-0.5cm}
\end{figure}

We present the results of \shortname along with
\shortname augmented with expert trajectories in~\figref{fig:exp_results}.
We observe that the incorporation of expert trajectories
helps a considerable amount.  Despite only using a small number of expert trajectories
(\ie~$10$) as opposed to the mini-batch size of $400$,
the inclusion of expert trajectories in the training process significantly improves the agent's 
performance.  
We performed similar experiments with \shortuname and observed
a similar lift from using expert trajectories.
Incorporating expert trajectories in \shortname is relatively trivial
compared to the specialized methods developed for other 
policy based algorithms~\cite{abbeel2004apprenticeship, ho2016generative}.
While we did not compare to other
algorithms that take advantage of expert trajectories, 
this success shows the promise of using path-wise consistencies.
Importantly, the ability of \shortname to incorporate expert trajectories without
requiring adjustment or correction is a desirable property in real-world
applications such as robotics.

\section{Conclusion}
\label{conc}

We study the characteristics of the optimal policy and state values
for a maximum expected reward objective in the presence of 
\emph{discounted entropy regularization}. 
The introduction of an entropy regularizer induces
an interesting softmax consistency
between the optimal policy and optimal state values, which
may be expressed as either a single-step or multi-step consistency.
This softmax consistency leads to the development of \longname (\shortname),
an RL algorithm that resembles actor-critic in that it
maintains and jointly learns a model of the state values and a model
of the policy, and is similar to Q-learning in that it minimizes a
measure of temporal consistency error.  
We also propose the variant \shortuname which maintains a single model
for both the policy and the values, thus upending the
actor-critic paradigm of separating the actor from the critic.
Unlike standard policy based RL algorithms, \shortname and \shortuname apply 
to both on-policy and
off-policy trajectory samples. 
Further, unlike value based RL algorithms, \shortname and \shortuname
can take advantage of multi-step consistencies.
Empirically, \shortname and \shortuname exhibit a significant
improvement over baseline methods across several algorithmic
benchmarks.

\section{Acknowledgment}

We thank Rafael Cosman, Brendan O'Donoghue, Volodymyr Mnih, George
Tucker, Irwan Bello, and the Google Brain team for insightful comments
and discussions.

 
\bibliography{nips.bib}
\bibliographystyle{abbrv}

\newpage
\appendix

\section{Pseudocode}
Pseudocode for PCL is presented in
Algorithm~\ref{alg:cac}.

\begin{algorithm}[!htb]
\caption{\longname}
\label{alg:cac}    

\begin{algorithmic}
\STATE {\bfseries Input:} 
Environment $ENV$, 
learning rates $\eta_\pi,\eta_v$, discount factor $\gamma$,
rollout $d$, number of steps $N$, replay buffer capacity $B$, prioritized
replay hyperparameter $\alpha$.

\FUNCTION{Gradients($s_{0:T}$)}
\STATE \emph{// We use $\G{\gamma}(s_{t:t+d}, \pitheta)$ to denote a discounted sum of log-probabilities from $s_t$ to $s_{t+d}$.}
\STATE Compute $\Delta\theta = \sum_{t=0}^{T-d} \C{\gamma}{\theta}{\phi}(s_{t:t+d}) \nabla_\theta \G{\gamma}(s_{t:t+d}, \pitheta)$.
\STATE Compute $\Delta\phi=
\sum_{t=0}^{T-d} \C{\gamma}{\theta}{\phi}(s_{t:t+d}) \left(\nabla_\phi\Vphi(s_t) - \gamma^{d}\nabla_\phi\Vphi(s_{t+d})\right)$.

\STATE \emph{Return} $\Delta\theta, \Delta\phi$
\ENDFUNCTION

\STATE Initialize $\theta,\phi$.
\STATE Initialize empty replay buffer $RB(\alpha)$.
\FOR{$i=0$ {\bfseries to} $N-1$}
\STATE Sample $s_{0:T}\sim\pitheta(s_{0:})$ on $ENV$.
\STATE $\Delta\theta, \Delta\phi = \text{Gradients}(s_{0:T})$.
\STATE Update $\theta \leftarrow \theta + \eta_\pi\Delta\theta$.
\STATE Update $\phi \leftarrow \phi + \eta_V\Delta\phi$.

\STATE Input $s_{0:T}$ into $RB$ with priority $R^1(s_{0:T})$.
\STATE If $|RB| > B$, remove episodes uniformly at random.
\STATE Sample $s_{0:T}$ from $RB$.
\STATE $\Delta\theta, \Delta\phi = \text{Gradients}(s_{0:T})$.
\STATE Update $\theta \leftarrow \theta + \eta_\pi\Delta\theta$.
\STATE Update $\phi \leftarrow \phi + \eta_v\Delta\phi$.

\ENDFOR

\end{algorithmic}
\end{algorithm}


\section{Experimental Details}

We describe the tasks we experimented on as well as details of the 
experimental setup.

\subsection{Synthetic Tree}

As an initial testbed, we developed a simple synthetic environment.
The environment is defined by a 
binary decision tree of depth 20.  For each training run,
the reward on each edge is sampled uniformly from $[-1, 1]$
and subsequently normalized so that the 
maximal reward trajectory has total reward 20.  We trained using
a fully-parameterized model: for each node $s$ in the decision tree
there are two parameters to determine the logits of
$\pitheta(-|s)$ and
one parameter to determine $\Vphi(s)$.  In the Q-learning
and \shortuname
implementations only two parameters per node $s$ are needed
to determine the Q-values.

\subsection{Algorithmic Tasks}

For more complex environments, we evaluated \shortname, \shortuname, and the two baselines
on the algorithmic tasks from the OpenAI
Gym library~\citep{Brockman}.  This library provides six tasks,
in rough order of difficulty: 
Copy, DuplicatedInput, RepeatCopy, Reverse, ReversedAddition, and ReversedAddition3.
In each of these tasks, an agent operates on a grid of characters or digits,
observing one character or digit at a time.  At each time step, the agent may move one step
in any direction and optionally write a character or digit to output.  
A reward is received on each correct emission.
The agent's goal for each task is:
\begin{itemize}
\item \textbf{Copy:} Copy a $1\times n$ sequence of characters to output.
\item \textbf{DuplicatedInput:} Deduplicate a $1\times n$ sequence of characters.
\item \textbf{RepeatCopy:} Copy a $1\times n$ sequence of characters first in forward order, then reverse, and finally forward again.
\item \textbf{Reverse:} Copy a $1\times n$ sequence of characters in reverse order.
\item \textbf{ReversedAddition:} Observe two ternary numbers in little-endian order via a $2\times n$ grid and output their sum.
\item \textbf{ReversedAddition3:} Observe three ternary numbers in little-endian order via a $3\times n$ grid and output their sum.
\end{itemize}
These environments have an implicit curriculum associated with them.  
To observe the performance of our algorithm without curriculum,
we also include a task ``Hard ReversedAddition'' which has the same 
goal as ReversedAddition but does not utilize curriculum.

For these environments,
we parameterized the agent by a recurrent neural network with LSTM~\citep{lstm} cells of hidden dimension 128.

\subsection{Implementation Details}
\label{sec:implementation}

For our hyperparameter search, we found it simple
to parameterize the critic learning rate
in terms of the actor learning rate as 
$\eta_v = C \eta_\pi$, where $C$ is the \emph{critic weight}.

For the Synthetic Tree environment we used a batch size of 10, rollout of $d=3$,
discount of $\gamma=1.0$, and a replay buffer capacity of 10,000.  We fixed the
$\alpha$ parameter for \shortname's replay buffer to 1 and used $\epsilon=0.05$
for DQN.  To find the optimal hyperparameters, we performed an
extensive grid search over actor learning rate $\eta_\pi\in\{0.01, 0.05,
0.1\}$; critic weight $C\in\{0.1, 0.5, 1\}$; entropy regularizer
$\tau\in\{0.005, 0.01, 0.025, 0.05, 0.1, 0.25, 0.5, 1.0\}$ for A3C, \shortname, \shortuname;
and $\alpha\in\{0.1, 0.3, 0.5, 0.7, 0.9\},\beta\in\{0.2, 0.4, 0.6, 0.8, 1.0\}$
for DQN replay buffer parameters.  We used standard gradient descent
for optimization.

For the algorithmic tasks we used a batch size of 400,
rollout of $d=10$,
a replay buffer of capacity 100,000, ran using
distributed training with 4 workers, and fixed the 
actor learning rate $\eta_\pi$ to 0.005, which we found 
to work well across all variants.
To find the optimal hyperparameters, we performed an extensive
grid search over
discount $\gamma\in\{0.9, 1.0\}$, 
$\alpha\in\{0.1, 0.5\}$ for \shortname's replay buffer;
critic weight $C\in\{0.1, 1\}$;
entropy regularizer $\tau\in\{0.005, 0.01, 0.025, 0.05, 0.1, 0.15\}$;
$\alpha\in\{0.2, 0.4, 0.6, 0.8\}$, $\beta\in\{0.06, 0.2, 0.4, 0.5, 0.8\}$ for
the prioritized DQN replay buffer; and also experimented with exploration rates
$\epsilon\in\{0.05, 0.1\}$ and copy frequencies for the target DQN, $\{100, 200, 400,
600\}$.  In these experiments, we used the Adam
optimizer~\citep{adam}.

All experiments were implemented using Tensorflow~\citep{tensorflow}.


\section{Proofs}
\label{app:consist}

In this section, we provide a general theoretical foundation for this work,
including proofs of the main path consistency results.
We first
establish the basic results for a simple one-shot decision making setting.
These initial results will be useful in the proof of the general infinite
horizon setting.  

Although the main paper expresses the main claims under an assumption
of deterministic dynamics, this assumption is not necessary:
we restricted attention to the deterministic case in the main body
merely for clarity and ease of explanation.
Given that in this appendix we provide the general foundations for this work, 
we consider the more general stochastic setting throughout the later sections.

In particular, for the general stochastic, infinite horizon setting, 
we introduce and discuss the
entropy regularized expected return $O_{ENT}$ and define a ``softmax'' operator
$\calB^*$ (analogous to the Bellman operator for hard-max Q-values). 
We then show the existence of a unique fixed point $V^*$ of $\calB^*$, 
by establishing that the softmax  Bellman operator ($\calB^*$) is a 
contraction under the infinity norm.  
We then relate $V^*$ to the optimal value of the entropy regularized expected 
reward objective $\ento$, which we term $V^\dag$.
We are able to show that $V^* = V^\dag$, as expected.
Subsequently, we present a policy determined by $V^*$ that satisfies 
$V^*(s) = \ento(s, \pi^*)$.  
Then given the characterization of $\pi^*$ in terms of $V^*$, we
establish the consistency property stated in Theorem 1 of the main text.
Finally, we show that a consistent solution is optimal by satisfying the KKT
conditions of the constrained optimization problem (establishing Theorem 4 of
the main text).

\subsection{Basic results for one-shot entropy regularized optimization}

For $\tau>0$
and any vector $\vec{q}\in\mathbb{R}^n$, $n<\infty$,
define the scalar valued function $F_\tau$ (the ``softmax'') by
\begin{align}
F_\tau(\vec{q}) &= 
\tau\log\left(\sum_{a=1}^ne^{q_a/\tau}\right)
\end{align}
and define the vector valued function $\vec{f}_\tau$ 
(the ``soft indmax'')
by
\begin{align}
\vec{f}_\tau(\vec{q}) 
&= 
\frac{e^{\vec{q}/\tau}}{\sum_{a=1}^ne^{q_a/\tau}}
=
e^{(\vec{q}-F_\tau(\vec{q}))/\tau}
,
\end{align}
where the exponentiation is component-wise.
It is easy to verify that $\vec{f}_\tau=\nabla F_\tau$.
Note that $\vec{f}_\tau$ maps any real valued vector to a probability
vector.
We denote the probability simplex by
$\Delta=\{\vec{\pi}:\vec{\pi}\geq0,\vec{1}\cdot\vec{\pi}=1\}$,
and denote the entropy function by $H(\vec{\pi})=-\vec{\pi}\cdot\log\vec{\pi}$.

\begin{lemma}
\label{lem:a1}
\begin{align}
F_\tau(\vec{q})
    &= \max_{\vec{\pi}\in\Delta}\Bigl\{\vec{\pi}\cdot\vec{q}+\tau H(\vec{\pi})\Bigr\}
\label{eq:a1}
\\
&= \vec{f}_\tau(\vec{q})\cdot\vec{q}+\tau H(\vec{f}_\tau(\vec{q}))
\label{eq:a2}
\end{align}
\end{lemma}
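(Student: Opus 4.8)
The statement is the standard variational (Gibbs / Donsker–Varadhan) characterization of log-sum-exp, so I would prove it by solving the concave maximization problem on the simplex directly with Lagrange multipliers. First I would observe that the objective $g(\vec\pi) = \vec\pi\cdot\vec q + \tau H(\vec\pi) = \sum_a \pi_a q_a - \tau\sum_a \pi_a\log\pi_a$ is strictly concave in $\vec\pi$ on the simplex (the entropy is strictly concave and $\tau>0$, and the linear term does not affect concavity), so it attains a unique maximum, and the maximum is interior because $\partial g/\partial\pi_a \to +\infty$ as $\pi_a\to 0^+$, ruling out boundary optima. Hence I may use the equality constraint $\vec 1\cdot\vec\pi = 1$ only, via a single multiplier.

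Next I would form $\Lagr(\vec\pi,\lambda) = \sum_a \pi_a q_a - \tau\sum_a\pi_a\log\pi_a - \lambda(\sum_a\pi_a - 1)$ and set $\partial\Lagr/\partial\pi_a = q_a - \tau(\log\pi_a + 1) - \lambda = 0$, which gives $\pi_a = \exp\{(q_a - \lambda - \tau)/\tau\} = \exp\{(q_a-\lambda)/\tau - 1\}$, i.e.\ $\pi_a \propto e^{q_a/\tau}$. Enforcing $\sum_a\pi_a = 1$ pins the normalization and yields exactly $\pi_a = e^{q_a/\tau}/\sum_b e^{q_b/\tau}$, which is precisely $\vec f_\tau(\vec q)$ by definition. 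This establishes that the maximizer in \eqref{eq:a1} is $\vec f_\tau(\vec q)$, which immediately gives equality \eqref{eq:a2} once I substitute back.

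It then remains to evaluate the objective at $\vec\pi = \vec f_\tau(\vec q)$ and check it equals $F_\tau(\vec q)$. Using the identity $\log f_\tau(\vec q)_a = (q_a - F_\tau(\vec q))/\tau$ noted just before the lemma, I compute
\[
\vec f_\tau(\vec q)\cdot\vec q + \tau H(\vec f_\tau(\vec q))
= \sum_a f_\tau(\vec q)_a q_a - \tau\sum_a f_\tau(\vec q)_a \cdot \frac{q_a - F_\tau(\vec q)}{\tau}
= F_\tau(\vec q)\sum_a f_\tau(\vec q)_a
= F_\tau(\vec q),
\]
since $\vec f_\tau(\vec q)$ sums to $1$; the $\sum_a f_\tau(\vec q)_a q_a$ terms cancel. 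This closes the chain \eqref{eq:a1}$=$\eqref{eq:a2}$=F_\tau(\vec q)$. Alternatively, and perhaps more cleanly, I could note $\vec f_\tau = \nabla F_\tau$ (stated in the excerpt) and that $F_\tau$ is convex, so $F_\tau$ equals the biconjugate of its Legendre transform; the convex conjugate of $F_\tau$ restricted appropriately is $-\tau H$ on the simplex (and $+\infty$ off it), which gives \eqref{eq:a1} by convex duality. I expect the only real subtlety — and the step worth stating carefully rather than the routine Lagrange computation — is justifying that the optimum is interior so that no inequality multipliers are needed and the first-order condition is both necessary and sufficient; this follows from strict concavity plus the blow-up of the entropy gradient at the boundary of $\Delta$.
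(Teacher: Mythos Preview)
Your proposal is correct and follows essentially the same route as the paper: form the Lagrangian for the concave problem on $\Delta$, solve the first-order conditions to obtain $\vec\pi=\vec f_\tau(\vec q)$, invoke strict concavity for uniqueness, and then verify algebraically that the optimal value equals $F_\tau(\vec q)$. Your explicit argument that the optimum is interior (via the blow-up of the entropy gradient at the boundary) is a detail the paper leaves implicit, and your alternative convex-duality route is not used here but is exactly how the paper later proves Lemma~\ref{lem:Fconj}.
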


\begin{proof}
First consider the constrained optimization problem on the right hand side of \eqref{eq:a1}.
The Lagrangian is given by
$L=\vec{\pi}\cdot(\vec{q}-\tau\log\vec{\pi})
+ \lambda(1-\vec{1}\cdot\vec{\pi})$,
hence
$\nabla L=\vec{q}-\tau\log\vec{\pi}-\tau-\lambda$.
The KKT conditions for this optimization problems are the following system of $n+1$ equations
\begin{align}
\vec{1}\cdot\vec{\pi} &=1
\label{eq:kkt1}
\\
\tau\log\vec{\pi} &= \vec{q}-v
\label{eq:kkt2}
\end{align}
for the $n+1$ unknowns, $\vec{\pi}$ and $v$, where $v=\lambda+\tau$.
Note that for any $v$, 
satisfying \eqref{eq:kkt2} requires
the unique assignment
$\vec{\pi}=\exp((\vec{q}-v)/\tau)$,
which also ensures $\vec{\pi}>0$.
To subsequently satisfy \eqref{eq:kkt1},
the equation $1=\sum_a\exp((q_a-v)/\tau)=e^{-v/\tau}\sum_a\exp(q_a/\tau)$
must be solved for $v$;
since the right hand side is strictly decreasing in $v$,
the solution is also unique and 
in this case
given by $v=F_\tau(\vec{q})$.
Therefore 
$\vec{\pi}=\vec{f}_\tau(\vec{q})$
and 
$v=F_\tau(\vec{q})$ 
provide the unique solution to the KKT conditions
\eqref{eq:kkt1}-\eqref{eq:kkt2}.
Since the objective is strictly concave,
$\vec{\pi}$ must be the unique global maximizer,
establishing \eqref{eq:a2}.
It is then easy to show
$F_\tau(\vec{q})
= \vec{f}_\tau(\vec{q})\cdot\vec{q}+\tau H(\vec{f}_\tau(\vec{q}))$
by algebraic manipulation, which establishes \eqref{eq:a1}.
\end{proof}

\begin{corollary}[Optimality Implies Consistency]
\label{cor:a1}
If $v^*=\displaystyle\max_{\vec{\pi}\in\Delta}\Bigl\{\vec{\pi}\cdot\vec{q}+\tau H(\vec{\pi})\Bigr\}$
then
\begin{align}
v^*&= q_a-\tau\log\pistar_a \;\mbox{ for all } a,
\end{align}
where $\vec{\pi}^*=\vec{f}_\tau(\vec{q})$.
\end{corollary}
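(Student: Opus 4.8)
The plan is to read the claim directly off Lemma~\ref{lem:a1}, and more specifically off the KKT analysis carried out in its proof. First I would invoke \eqref{eq:a1}, which identifies the optimal value of the one-shot entropy regularized objective with the softmax: $v^* = F_\tau(\vec{q})$. Next, since the objective $\vec{\pi}\cdot\vec{q} + \tau H(\vec{\pi})$ is strictly concave on the simplex $\Delta$ (linearity in $\vec{\pi}$ plus strict concavity of $\tau H$), it has a unique maximizer, and the proof of Lemma~\ref{lem:a1} shows that this maximizer is exactly $\vec{\pi}^* = \vec{f}_\tau(\vec{q})$, which matches the hypothesis of the corollary; in particular ``the'' optimal policy is well defined.

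The key step is then to use the closed form $\vec{f}_\tau(\vec{q}) = e^{(\vec{q} - F_\tau(\vec{q}))/\tau}$ recorded immediately before the lemma. Taking the component-wise logarithm gives $\tau\log\pi^*_a = q_a - F_\tau(\vec{q})$ for every $a$. Substituting $v^* = F_\tau(\vec{q})$ and rearranging yields $v^* = q_a - \tau\log\pi^*_a$ for all $a$, which is precisely the asserted consistency. Equivalently, one may simply cite the stationarity condition \eqref{eq:kkt2}, namely $\tau\log\vec{\pi} = \vec{q} - v$, evaluated at the unique KKT solution $(\vec{\pi},v) = (\vec{f}_\tau(\vec{q}),\, F_\tau(\vec{q}))$ identified in that proof, and read off the same identity componentwise.

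I do not expect any genuine obstacle here: the statement is essentially a repackaging of the stationarity condition already established in Lemma~\ref{lem:a1}, written in the form that will be convenient downstream. Indeed this corollary is the one-shot precursor of the path consistency in Theorem~\ref{thm:1}, with $q_a$ playing the role of $r(s,a) + \gamma V^*(s')$ and $v^*$ the role of $V^*(s)$; the only point worth an explicit sentence is the uniqueness of the maximizer $\vec{\pi}^* = \vec{f}_\tau(\vec{q})$, so that the corollary's hypothesis picks out a single well-defined $\vec{\pi}^*$.
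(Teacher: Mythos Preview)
Your proposal is correct and matches the paper's own proof essentially step for step: both invoke Lemma~\ref{lem:a1} to identify $v^* = F_\tau(\vec{q})$ with maximizer $\vec{\pi}^* = \vec{f}_\tau(\vec{q})$, then take the componentwise logarithm of the closed form $\vec{f}_\tau(\vec{q}) = e^{(\vec{q}-F_\tau(\vec{q}))/\tau}$ to obtain $\tau\log\pi^*_a = q_a - F_\tau(\vec{q})$ and rearrange. The only cosmetic difference is that you also point to the KKT condition \eqref{eq:kkt2} as an alternative citation, which the paper does not bother to repeat.
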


\begin{proof}
From Lemma~\ref{lem:a1} we know
$v^*=F_\tau(\vec{q})=\vec{\pi}^*\cdot(\vec{q}-\tau\log\vec{\pi}^*)$
where $\vec{\pi}^*=\vec{f}_\tau(\vec{q})$.
From the definition of $\vec{f}_\tau$ it also follows that
$\log\pistar_a = (q_a-F_\tau(\vec{q}))/\tau$ for all $a$,
hence $v^*=F_\tau(\vec{q})=q_a-\tau\log\pistar_a$ for all $a$.
\end{proof}

\begin{corollary}[Consistency Implies Optimality]
\label{cor:a2}
If $v\in\mathbb{R}$ and $\vec{\pi}\in\Delta$ jointly satisfy
\begin{align}
v&= q_a-\tau\log\pi_a \;\mbox{ for all } a,
\label{eq:consistency}
\end{align}
then
$v=F_\tau(\vec{q})$ 
and 
$\vec{\pi}=\vec{f}_\tau(\vec{q})$;
that is, $\vec{\pi}$ must be an optimizer for \eqref{eq:a1}
and $v$ is its corresponding optimal value.
\end{corollary}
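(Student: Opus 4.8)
The plan is to show that the consistency system \eqref{eq:consistency}, together with the constraint $\vec{\pi}\in\Delta$, already forces $v$ and $\vec{\pi}$ to equal exactly $F_\tau(\vec{q})$ and $\vec{f}_\tau(\vec{q})$, after which Lemma~\ref{lem:a1} immediately identifies $\vec{\pi}$ as the (strictly concave) optimizer of \eqref{eq:a1} with optimal value $v$. In fact \eqref{eq:consistency} together with the normalization $\vec{1}\cdot\vec{\pi}=1$ is verbatim the KKT system \eqref{eq:kkt1}--\eqref{eq:kkt2} solved in the proof of Lemma~\ref{lem:a1}, so this corollary is really just a restatement of the uniqueness half of that argument.

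First I would rearrange \eqref{eq:consistency} to isolate each $\pi_a$: since $\tau>0$, the equation $v=q_a-\tau\log\pi_a$ is equivalent to $\pi_a=\exp((q_a-v)/\tau)$ for every $a$, which in particular already guarantees $\pi_a>0$. Next I would impose the normalization coming from $\vec{\pi}\in\Delta$: substituting gives $1=\sum_a\exp((q_a-v)/\tau)=e^{-v/\tau}\sum_a e^{q_a/\tau}$, hence $e^{v/\tau}=\sum_a e^{q_a/\tau}$, i.e.\ $v=\tau\log\sum_a e^{q_a/\tau}=F_\tau(\vec{q})$. Because the right-hand side $\sum_a\exp((q_a-v)/\tau)$ is strictly decreasing in $v$, this is the \emph{only} value of $v$ compatible with the simplex constraint. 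Plugging $v=F_\tau(\vec{q})$ back into $\pi_a=\exp((q_a-v)/\tau)$ yields $\pi_a=\exp((q_a-F_\tau(\vec{q}))/\tau)$, which is precisely the $a$-th component of $\vec{f}_\tau(\vec{q})$; hence $\vec{\pi}=\vec{f}_\tau(\vec{q})$.

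Finally, by \eqref{eq:a1} of Lemma~\ref{lem:a1}, $\vec{f}_\tau(\vec{q})$ attains $\max_{\vec{\pi}\in\Delta}\{\vec{\pi}\cdot\vec{q}+\tau H(\vec{\pi})\}$, with that maximum equal to $F_\tau(\vec{q})=v$; since the objective is strictly concave on $\Delta$, $\vec{\pi}$ is the unique maximizer, giving exactly the claimed conclusion. I do not anticipate a genuine obstacle here; the only points that need care are recording that $\vec{\pi}\in\Delta$ is actually used (it is what pins down $v$) and that the strict monotonicity of $v\mapsto\sum_a e^{(q_a-v)/\tau}$ is what upgrades existence of a solution to uniqueness — both are immediate.
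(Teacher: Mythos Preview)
Your proposal is correct and takes essentially the same approach as the paper: the paper's proof simply observes that \eqref{eq:consistency} together with $\vec{\pi}\in\Delta$ is exactly the KKT system \eqref{eq:kkt1}--\eqref{eq:kkt2}, whose unique solution was already identified in the proof of Lemma~\ref{lem:a1}. You spell out the same computation (solve for $\pi_a$, normalize to pin down $v$, substitute back) in more detail, which is fine and arguably clearer.
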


\begin{proof}
Any $v$ and $\vec{\pi}\in\Delta$ that jointly satisfy \eqref{eq:consistency}
must also satisfy the KKT conditions \eqref{eq:kkt1}-\eqref{eq:kkt2};
hence $\vec{\pi}$ must be the unique maximizer for \eqref{eq:a1}
and $v$ its corresponding objective value.
\end{proof}

Although these results are elementary, they reveal a strong connection
between optimal state values ($v$), optimal action values ($\vec{q}$)
and optimal policies ($\vec{\pi}$) under the softmax operators.
In particular, Lemma~\ref{lem:a1} states that,
if $\vec{q}$ is an optimal action value at some current state,
the optimal state value must be $v=F_\tau(\vec{q})$,
which is simply
the entropy regularized value of the optimal policy,
$\vec{\pi}=\vec{f}_\tau(\vec{q})$,
at the current state.

Corollaries \ref{cor:a1} and \ref{cor:a2} then make the stronger
observation that
this mutual consistency between the optimal state value,
optimal action values and optimal policy probabilities
must hold for \emph{every} action,
not just in expectation over actions sampled from $\vec{\pi}$;
and furthermore
that achieving mutual consistency in this form is \emph{equivalent}
to achieving optimality.

Below we will also need to make use of the following
properties of $F_\tau$.

\begin{lemma}
\label{lem:Fconj}
For any vector $\vec{q}$,
\begin{align}
F_\tau(\vec{q}) &=
\sup_{\vec{p}\in\Delta}
\Bigl\{\vec{p}\cdot\vec{q}-\tau\vec{p}\cdot\log\vec{p}\Bigr\}
.
\end{align}
\end{lemma}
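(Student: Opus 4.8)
The plan is to observe that Lemma~\ref{lem:Fconj} is essentially a restatement of Lemma~\ref{lem:a1}. Since the Shannon entropy satisfies $H(\vec{p}) = -\vec{p}\cdot\log\vec{p}$ under the usual convention $0\log 0 = 0$, the objective on the right-hand side, $\vec{p}\cdot\vec{q} - \tau\vec{p}\cdot\log\vec{p}$, is exactly $\vec{p}\cdot\vec{q} + \tau H(\vec{p})$. So the claimed identity is just \eqref{eq:a1} with ``$\max$'' rewritten as ``$\sup$'', and the only thing to check is that the supremum over the closed simplex $\Delta$ coincides with the maximum identified in Lemma~\ref{lem:a1}, i.e. that nothing is gained by approaching the boundary of $\Delta$ where some coordinates vanish.

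First I would note that $t \mapsto t\log t$ extends continuously to $[0,1]$ by $0\log 0 = 0$, so $\vec{p} \mapsto \vec{p}\cdot\vec{q} - \tau\vec{p}\cdot\log\vec{p}$ is continuous on the compact set $\Delta$; hence the supremum is attained and ``$\sup$'' may be replaced by ``$\max$''. Next I would invoke strict concavity of this objective on $\Delta$ (the linear term is harmless and $-\tau\vec{p}\cdot\log\vec{p}$ is strictly concave), which guarantees a unique global maximizer over the convex set $\Delta$. Lemma~\ref{lem:a1} already exhibits a stationary point of the associated Lagrangian, namely $\vec{p} = \vec{f}_\tau(\vec{q})$, which lies in the relative interior of $\Delta$ (all coordinates strictly positive) and attains the value $F_\tau(\vec{q})$ by \eqref{eq:a2}; since the objective is concave, this stationary point is a global maximizer, hence \emph{the} maximizer. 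Therefore $\sup_{\vec{p}\in\Delta}\{\vec{p}\cdot\vec{q} - \tau\vec{p}\cdot\log\vec{p}\} = F_\tau(\vec{q})$, as claimed.

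The only point requiring a little care — and the one I would flag as the main obstacle — is the boundary behaviour: one must rule out that faces of $\Delta$ on which $\log p_a \to -\infty$ for the vanishing coordinates could allow a larger value. This is precisely what continuity of $t\log t$ on $[0,1]$ together with strict concavity settles, so no separate boundary analysis is needed. (Alternatively, one could phrase the whole statement as the standard Fenchel conjugacy between the log-sum-exp function and the negative Shannon entropy restricted to the simplex, but reducing directly to Lemma~\ref{lem:a1} is the shortest route.)
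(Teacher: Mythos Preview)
Your proof is correct but takes a different route from the paper. The paper establishes Lemma~\ref{lem:Fconj} via Fenchel duality: it computes the convex conjugate $F^*_\tau(\vec{p})=\sup_{\vec{q}}\{\vec{q}\cdot\vec{p}-F_\tau(\vec{q})\}=\tau\,\vec{p}\cdot\log\vec{p}$ with domain $\Delta$, and then invokes the biconjugate identity $F_\tau=F^{**}_\tau$ (valid since $F_\tau$ is closed and convex) to recover the sup formula. You instead observe that, since the paper defines $H(\vec{\pi})=-\vec{\pi}\cdot\log\vec{\pi}$, the statement is \emph{literally} \eqref{eq:a1} with $\max$ replaced by $\sup$, and you close the gap by continuity of $t\log t$ on $[0,1]$ and compactness of $\Delta$. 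Your route is shorter and more elementary, needing no convex-analysis machinery beyond what Lemma~\ref{lem:a1} already provides; the paper's route is independent of Lemma~\ref{lem:a1} and makes the conjugate pair $(F_\tau,F^*_\tau)$ explicit, which is the form actually used in the proof of Lemma~\ref{lem:qcontract}. You even flag the Fenchel argument as an alternative in your final parenthetical, so you are aware of both. One small remark: your boundary discussion, while correct, is more than is strictly required, since Lemma~\ref{lem:a1} already writes $\max$ over the closed simplex $\Delta$; once you note that the objective is continuous there, $\sup=\max$ is immediate and the boundary is already accounted for.
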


\begin{proof}
Let $F^*_\tau$ denote the conjugate of $F_\tau$,
which is given by
\begin{align}
F_\tau^*(\vec{p})
&= \sup_{\vec{q}}\Bigl\{{\vec{q}\cdot\vec{p}-F_\tau(\vec{q})}\Bigr\}
= \tau\vec{p}\cdot\log\vec{p}
\end{align}
for $\vec{p}\in\textrm{dom}(F^*_\tau)=\Delta$.
Since $F_\tau$ is closed and convex, we also have that
$F_\tau=F^{**}_\tau$ \citep[Section~4.2]{borweinlewis00};
hence 
\begin{align}
F_\tau(\vec{q}) &=
\sup_{\vec{p}\in\Delta}\Bigl\{{\vec{q}\cdot\vec{p}-F^*_\tau(\vec{p})}\Bigr\}
.
\end{align}
\end{proof}

\begin{lemma}
\label{lem:qcontract}
For any two vectors $\vec{q}^{(1)}$ and $\vec{q}^{(2)}$,
\begin{align}
F_\tau(\vec{q}^{(1)}) - F_\tau(\vec{q}^{(2)})
&\leq 
\max_a 
\Bigl\{{q}^{(1)}_a - {q}^{(2)}_a\Bigr\}
.
\end{align}
\end{lemma}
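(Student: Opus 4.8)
The plan is to exploit the variational (convex-conjugate) representation of $F_\tau$ just established in Lemma~\ref{lem:Fconj}, namely
\begin{equation*}
F_\tau(\vec{q}) ~=~ \sup_{\vec{p}\in\Delta}\Bigl\{\vec{p}\cdot\vec{q}-\tau\vec{p}\cdot\log\vec{p}\Bigr\}~.
\end{equation*}
First I would let $\vec{p}^*$ denote a maximizer of this problem for $\vec{q}=\vec{q}^{(1)}$; by Lemma~\ref{lem:a1} we know this maximizer exists and equals $\vec{f}_\tau(\vec{q}^{(1)})\in\Delta$, though all that matters is that it is \emph{some} probability vector. Then by definition of the supremum, $\vec{p}^*$ is a feasible (but not necessarily optimal) point for the problem defining $F_\tau(\vec{q}^{(2)})$, so
\begin{equation*}
F_\tau(\vec{q}^{(1)}) = \vec{p}^*\cdot\vec{q}^{(1)}-\tau\vec{p}^*\cdot\log\vec{p}^* \qquad\text{and}\qquad F_\tau(\vec{q}^{(2)}) \ge \vec{p}^*\cdot\vec{q}^{(2)}-\tau\vec{p}^*\cdot\log\vec{p}^*~.
\end{equation*}

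Subtracting the second (in)equality from the first, the entropy terms cancel and I am left with
\begin{equation*}
F_\tau(\vec{q}^{(1)}) - F_\tau(\vec{q}^{(2)}) ~\le~ \vec{p}^*\cdot(\vec{q}^{(1)}-\vec{q}^{(2)})~.
\end{equation*}
Since $\vec{p}^*\in\Delta$ is a probability vector, the right-hand side is a convex combination of the entries of $\vec{q}^{(1)}-\vec{q}^{(2)}$, and therefore it is at most $\max_a\{q^{(1)}_a - q^{(2)}_a\}$, which closes the argument.

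Alternatively, one can bypass Lemma~\ref{lem:Fconj} entirely with a direct computation: writing $m=\max_a\{q^{(1)}_a-q^{(2)}_a\}$ gives $q^{(1)}_a\le q^{(2)}_a+m$ for every $a$, hence $\sum_a e^{q^{(1)}_a/\tau}\le e^{m/\tau}\sum_a e^{q^{(2)}_a/\tau}$, and applying $\tau\log(\cdot)$ yields $F_\tau(\vec{q}^{(1)})\le m+F_\tau(\vec{q}^{(2)})$. I would likely present the conjugate-duality version since it reuses the machinery already developed and makes the role of $\vec{p}^*\in\Delta$ transparent. There is no real obstacle here — the only ``step'' is recognizing that the supremum in Lemma~\ref{lem:Fconj} lets one test the $\vec{q}^{(1)}$-optimal $\vec{p}^*$ against the $\vec{q}^{(2)}$-problem, and that averaging against a probability vector never exceeds the maximum; this lemma is the routine ingredient feeding into the subsequent proof that $\calB^*$ is an $\ell_\infty$ contraction.
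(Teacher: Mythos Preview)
Your proposal is correct and follows essentially the same approach as the paper: both exploit the variational representation from Lemma~\ref{lem:Fconj} and cancel the entropy term by evaluating the two objectives at a common probability vector, then bound the resulting linear form by the max. The only cosmetic difference is that the paper writes the argument as a $\sup$--$\inf$ and takes $\vec{p}^{(2)}=\vec{p}^{(1)}$, whereas you name the maximizer $\vec{p}^*=\vec{f}_\tau(\vec{q}^{(1)})$ explicitly; your alternative direct computation is also valid and arguably cleaner, though the paper does not use it.
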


\begin{proof}
Observe that by Lemma~\ref{lem:Fconj}
\begin{align}
F_\tau(\vec{q}^{(1)}) - F_\tau(\vec{q}^{(2)})
&=
\sup_{\vec{p}^{(1)}\in\Delta}\Bigl\{
\vec{q}^{(1)}\cdot\vec{p}^{(1)}-F^*_\tau(\vec{p}^{(1)})\Bigl\}
-
\sup_{\vec{p}^{(2)}\in\Delta}\Bigl\{
\vec{q}^{(2)}\cdot\vec{p}^{(2)}-F^*_\tau(\vec{p}^{(2)})\Bigr\}
\\
&=
\sup_{\vec{p}^{(1)}\in\Delta}\Bigl\{
\inf_{\vec{p}^{(2)}\in\Delta}\Bigl\{
\vec{q}^{(1)}\cdot\vec{p}^{(1)}
-
\vec{q}^{(2)}\cdot\vec{p}^{(2)}
-(F^*_\tau(\vec{p}^{(1)})
-F^*_\tau(\vec{p}^{(2)}))\Bigr\}\Bigr\}
\\
&\leq
\sup_{\vec{p}^{(1)}\in\Delta}\Bigl\{
\vec{p}^{(1)}\cdot(\vec{q}^{(1)} - \vec{q}^{(2)})\Bigr\}
\quad\mbox{ by choosing } \vec{p}^{(2)}=\vec{p}^{(1)}
\\
&\leq
\max_{a}\Bigl\{
{q}^{(1)}_a - {q}^{(2)}_a
\Bigr\}
.
\end{align}
\end{proof}

\begin{corollary}
\label{cor:qcontract}
$F_\tau$ is an $\infty$-norm contraction; that is,
for any two vectors $\vec{q}^{(1)}$ and $\vec{q}^{(2)}$,
\begin{align}
\left|
F_\tau(\vec{q}^{(1)}) - F_\tau(\vec{q}^{(2)})
\right|
&\leq 
\|\vec{q}^{(1)}-\vec{q}^{(2)}\|_\infty
\end{align}
\end{corollary}

\begin{proof}
Immediate from Lemma~\ref{lem:qcontract}.
\end{proof}

\subsection{Background results for \emph{on-policy} entropy regularized updates}

Although the results in the main body of the paper are expressed
in terms of deterministic 
problems,
we will prove that all the desired properties hold 
for the more general {\bf stochastic} case, 
where there is a stochastic transition $s,a\mapsto s'$
determined by the environment.
Given the characterization for this general case,
the application to the deterministic 
case is immediate.
We continue to assume
that the action space is finite, and that the state space is discrete.

For any policy $\pi$, define the entropy regularized expected return by
\begin{align}
\tilde V^\pi(s_\ell)
= \ento(s_\ell,\pi)
&=
\expected_{a_{\ell}s_{\ell+1}...|s_\ell}
\Bigg[
\sum_{i=0}^{\infty}
\gamma^i\big(
r(s_{\ell+i},a_{\ell+i})
-\tau\log\pi(a_{\ell+i}|s_{\ell+i})
\big)
\Bigg]
,
\label{eq:V^pi}
\end{align}
where the expectation is taken with respect to the policy $\pi$ and
with respect to the stochastic state transitions determined by the
environment.
We will find it convenient to also work with 
the on-policy Bellman operator defined by
\begin{align}
(\calB^\pi V)(s) 
&=
\expected_{a,s'|s}\Big[r(s,a)-\tau\log\pi(a|s)+\gamma V(s')\Big]
\\
&=
\expected_{a|s}
\Big[r(s,a)-\tau\log\pi(a|s)+\gamma\expected_{s'|s,a}\big[V(s')\big]\Big]
\\
&= 
\pi(:\!|s)\cdot(Q(s,:)-\tau\log\pi(:\!|s)),
\label{eq:q^pi}
\quad\mbox{ where }
\\[1ex]
Q(s,a) &= r(s,a) + \gamma\expected_{s'|s,a}[V(s')]
\label{eq:Qvdef}
\end{align}
for each state $s$ and action $a$.
Note that in \eqref{eq:q^pi} we are using $Q(s,:)$ to
denote a vector values over choices of $a$ for a given $s$,
and $\pi(:\!|s)$ to denote the vector of conditional action probabilities
specified by $\pi$ at state $s$.

\begin{lemma}
\label{lem:a2}
For any policy $\pi$ and state $s$,
$\tilde V^\pi(s)$ satisfies the recurrence
\begin{align}
\tilde V^\pi(s)
&=
\expected_{a|s}\Big[
r(s,a)+\gamma\expected_{s'|s,a}[\tilde V^\pi(s')]-\tau\log\pi(a|s)
\Big]
\\
&=
\pi(:\!|s)\cdot\left(\tilde{Q}^{\pi}(s,:)-\tau\log\pi(:\!|s)\right)
\mbox{ where }
\tilde{Q}^\pi(s,a)=r(s,a)+\gamma\expected_{s'|s,a}[\tilde V^\pi(s')]
\\
&=
(\calB^\pi\tilde V^\pi)(s)
.
\end{align}
Moreover, $\calB^\pi$ is a contraction mapping.
\end{lemma}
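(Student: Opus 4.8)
The plan is to establish Lemma~\ref{lem:a2} in two parts: first the Bellman recurrence for $\tilde V^\pi$, then the contraction property of $\calB^\pi$. For the recurrence, I would start from the definition \eqref{eq:V^pi} and peel off the first term of the infinite sum. Writing the expectation over $a_\ell, s_{\ell+1}, \dots$ conditioned on $s_\ell$ as a nested expectation over $a_\ell \mid s_\ell$, then $s_{\ell+1}\mid s_\ell, a_\ell$, then the remaining tail, the $i=0$ term contributes $r(s_\ell, a_\ell) - \tau\log\pi(a_\ell\mid s_\ell)$, and the discounted tail $\sum_{i\ge 1}\gamma^i(\cdots)$ is exactly $\gamma\, \tilde V^\pi(s_{\ell+1})$ by the definition of $\tilde V^\pi$ applied at $s_{\ell+1}$ (using the Markov property of the transition dynamics and the stationarity of $\pi$). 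This gives the first displayed line. The second line is just rewriting $\expected_{a\mid s}$ as an inner product against $\pi(:\!\mid s)$ and introducing the notation $\tilde Q^\pi(s,a) = r(s,a) + \gamma\expected_{s'\mid s,a}[\tilde V^\pi(s')]$; the third line is then immediate from the definition \eqref{eq:q^pi}--\eqref{eq:Qvdef} of $\calB^\pi$ with $V = \tilde V^\pi$.

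For the contraction claim, I would show $\|\calB^\pi V_1 - \calB^\pi V_2\|_\infty \le \gamma \|V_1 - V_2\|_\infty$ directly. Fix a state $s$. The reward term $r(s,a)$ and the entropy term $-\tau\log\pi(a\mid s)$ in \eqref{eq:q^pi} do not depend on $V$, so they cancel in the difference, leaving
\begin{equation}
(\calB^\pi V_1)(s) - (\calB^\pi V_2)(s) = \gamma\, \expected_{a\mid s}\expected_{s'\mid s,a}\big[V_1(s') - V_2(s')\big].
\end{equation}
Bounding the absolute value of the right-hand side by $\gamma\, \expected_{a\mid s}\expected_{s'\mid s,a}\big[\,|V_1(s') - V_2(s')|\,\big] \le \gamma \|V_1 - V_2\|_\infty$, and taking the supremum over $s$, yields the contraction with modulus $\gamma < 1$. (If $\gamma = 1$ is to be allowed, one would instead invoke the entropy-regularized discounting assumption or restrict to $\gamma<1$ as is standard for the infinite-horizon treatment; I would state the contraction for $\gamma<1$.)

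The only genuinely delicate step is the convergence and well-definedness of the infinite sum in \eqref{eq:V^pi} — one needs the series $\sum_i \gamma^i(r(s_{\ell+i},a_{\ell+i}) - \tau\log\pi(a_{\ell+i}\mid s_{\ell+i}))$ to be absolutely convergent in expectation so that the peeling-off manipulation (exchanging the expectation with the split of the sum) is justified. With bounded rewards, $\gamma<1$, and finite action space (so $-\log\pi$ has expectation bounded by $\log|\mathcal{A}|$ under $\pi$), each term is uniformly bounded and the geometric factor gives absolute convergence, so Fubini/Tonelli applies and the rearrangement is legitimate. I expect this integrability bookkeeping to be the main obstacle to a fully rigorous argument; the rest is routine.
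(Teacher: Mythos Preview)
Your proposal is correct and follows essentially the same approach as the paper: the recurrence is obtained by peeling off the $i=0$ term of the defining series and recognizing the tail as $\gamma\,\tilde V^\pi(s_{\ell+1})$, exactly as the paper does. For the contraction claim the paper simply cites ``standard arguments about the on-policy Bellman operator'' \cite{tsitsiklisvanroy97}, whereas you supply the explicit $\gamma$-contraction computation and the Fubini/Tonelli bookkeeping; these are welcome details but do not constitute a different route.
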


\begin{proof}
Consider an arbitrary state $s_\ell$.
By the definition of $\tilde V^\pi(s_\ell)$ in \eqref{eq:V^pi} we have
\begin{align}
\tilde V^\pi(s_\ell)
&=
\expected_{a_{\ell}s_{\ell+1}...|s_\ell}
\Bigg[
\sum_{i=0}^{\infty}
\gamma^i\big(
r(s_{\ell+i},a_{\ell+i})
-\tau\log\pi(a_{\ell+i}|s_{\ell+i})
\big)
\Bigg]
\\
&=
\expected_{a_{\ell}s_{\ell+1}...|s_\ell}
\Bigg[
r(s_{\ell},a_{\ell})
-\tau\log\pi(a_\ell|s_{\ell})
\\
&\hspace*{1.0in}
+\gamma
\sum_{j=0}^{\infty}
\gamma^{j}\big(
r(s_{\ell+1+j},a_{\ell+1+j})
-\tau\log\pi(a_{\ell+1+j}|s_{\ell+1+j})
\big)
\Bigg]
\nonumber
\\
&=
\expected_{a_{\ell}|s_\ell}
\Bigg[
r(s_{\ell},a_{\ell})
-\tau\log\pi(a_\ell|s_{\ell})
\\
&\hspace*{0.6in}
+\gamma
\expected_{s_{\ell+1}a_{\ell+1}...|s_\ell,a_\ell}
\Bigg[
\sum_{j=0}^{\infty}
\gamma^{j}\big(
r(s_{\ell+1+j},a_{\ell+1+j})
-\tau\log\pi(a_{\ell+1+j}|s_{\ell+1+j})
\big)
\Bigg]\Bigg]
\nonumber
\\
&=
\expected_{a_{\ell}|s_\ell}
\Big[
r(s_{\ell},a_{\ell})
-\tau\log\pi(a_\ell|s_{\ell})
+\gamma\expected_{s_{\ell+1}|s_\ell,a_\ell}[\tilde V^\pi(s_{\ell+1})]
\Big]
\\[1ex]
&=
\pi(:\!|s_\ell)\cdot\Big(\tilde{Q}^{\pi}(s_\ell,:)-\tau\log\pi(:\!|s_\ell)\Big)
\\
&=
(\calB^\pi\tilde V^\pi)(s_\ell)
.
\end{align}
The fact that $\calB^\pi$ is a contraction mapping follows directly from
standard arguments about the on-policy Bellman operator
\cite{tsitsiklisvanroy97}.
\end{proof}

Note that this lemma shows 
$\tilde V^\pi$ is a \emph{fixed point} of the corresponding on-policy
Bellman operator $\calB^\pi$.
Next, we characterize how quickly convergence to a fixed point is achieved
by repeated application of ther $\calB^\pi$ operator.

\begin{lemma}
\label{lem:vdiff}
For any $\pi$ and any $V$, 
for all states $s_\ell$, and for all $k\geq0$ it holds that:
\\
$\big((\calB^\pi)^k V\big)(s_\ell)-\tilde V^\pi(s_\ell)
=
\gamma^k
\expected_{a_{\ell}s_{\ell+1}...s_{\ell+k}|s_\ell}\big[
V(s_{\ell+k})-\tilde V^\pi(s_{\ell+k})
\big]$
.
\end{lemma}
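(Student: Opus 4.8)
The plan is to prove the identity by induction on $k$, exploiting two facts established earlier: that $\tilde V^\pi$ is a fixed point of $\calB^\pi$ (Lemma~\ref{lem:a2}), and that $\calB^\pi$ is affine in its argument, so differences telescope cleanly.

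The base case $k=0$ is immediate: $(\calB^\pi)^0 V = V$, $\gamma^0 = 1$, and the right-hand side is $\expected[V(s_\ell)-\tilde V^\pi(s_\ell)\mid s_\ell] = V(s_\ell) - \tilde V^\pi(s_\ell)$, which matches the left-hand side. For the inductive step I would first record the affine property of $\calB^\pi$: directly from its definition, for any two functions $V$ and $W$ the reward term $r(s,a)$ and the $-\tau\log\pi(a\mid s)$ term cancel in the difference, leaving
\begin{align}
(\calB^\pi V)(s) - (\calB^\pi W)(s) ~=~ \gamma\,\expected_{a,s'\mid s}\big[V(s') - W(s')\big].
\end{align}
Applying this with $W = \tilde V^\pi$ and using $\calB^\pi \tilde V^\pi = \tilde V^\pi$ from Lemma~\ref{lem:a2}, and writing $(\calB^\pi)^{k+1}V = \calB^\pi\big((\calB^\pi)^k V\big)$, gives
\begin{align}
\big((\calB^\pi)^{k+1}V\big)(s_\ell) - \tilde V^\pi(s_\ell) ~=~ \gamma\,\expected_{a_\ell,s_{\ell+1}\mid s_\ell}\Big[\big((\calB^\pi)^k V\big)(s_{\ell+1}) - \tilde V^\pi(s_{\ell+1})\Big].
\end{align}

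Then I would apply the induction hypothesis to the bracketed quantity, with starting state $s_{\ell+1}$ and $k$ steps, obtaining $\gamma^k\,\expected_{a_{\ell+1}s_{\ell+2}\dots s_{\ell+1+k}\mid s_{\ell+1}}[V(s_{\ell+1+k}) - \tilde V^\pi(s_{\ell+1+k})]$ inside the outer expectation. Collapsing the one-step expectation $\expected_{a_\ell,s_{\ell+1}\mid s_\ell}$ composed with this $k$-step expectation into a single $(k{+}1)$-step expectation via the tower property of conditional expectation yields $\gamma^{k+1}\,\expected_{a_\ell s_{\ell+1}\dots s_{\ell+k+1}\mid s_\ell}[V(s_{\ell+k+1}) - \tilde V^\pi(s_{\ell+k+1})]$, which is exactly the claim for $k+1$, completing the induction.

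There is no genuine obstacle here; the content is essentially bookkeeping. The only points needing care are the degenerate base case (where the ``sequence'' $a_\ell \dots s_{\ell+k}$ is empty and the expectation is trivial) and verifying that the nested conditional expectations compose exactly into the claimed $(k{+}1)$-step expectation — i.e.\ a careful application of the tower property together with the Markov structure of the transitions.
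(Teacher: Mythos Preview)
Your proposal is correct and follows essentially the same approach as the paper: induction on $k$, base case trivial, inductive step via the fixed-point property $\calB^\pi\tilde V^\pi=\tilde V^\pi$ (Lemma~\ref{lem:a2}), cancellation of the $r$ and $-\tau\log\pi$ terms in the difference $\calB^\pi((\calB^\pi)^kV)-\calB^\pi\tilde V^\pi$, then IH plus the tower property to merge the nested expectations. The only cosmetic difference is that you isolate the ``affine'' identity $(\calB^\pi V)(s)-(\calB^\pi W)(s)=\gamma\,\expected_{a,s'|s}[V(s')-W(s')]$ up front, whereas the paper performs that cancellation inline.
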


\begin{proof}
Consider an arbitrary state $s_\ell$.
We use an induction on $k$.
For the base case, consider $k=0$ and observe that the claim follows trivially,
since
$\big((\calB^\pi)^0 V\big)(s_\ell)-
\big((\calB^\pi)^0 \tilde V^\pi\big)(s_\ell)
= V(s_\ell)-\tilde V^\pi(s_\ell)$.
For the induction hypothesis, assume the result holds for $k$.
Then consider:
\begin{align}
&
\big((\calB^\pi)^{k+1} V\big)(s_\ell)-
\big(\tilde V^\pi\big)(s_\ell)
\nonumber
\\
&=
\big((\calB^\pi)^{k+1} V\big)(s_\ell)-
\big((\calB^\pi)^{k+1} \tilde V^\pi\big)(s_\ell)
\quad\mbox{ (by Lemma~\ref{lem:a2}) }
\\
&=
\big(\calB^\pi(\calB^\pi)^k V\big)(s_\ell)-
\big(\calB^\pi(\calB^\pi)^k \tilde V^\pi\big)(s_\ell)
\\
&=
\expected_{a_\ell s_{\ell+1}|s_\ell}\Big[
r(s_\ell,a_\ell)-\tau\log\pi(a_\ell|s_\ell) + 
\gamma (\calB^\pi)^k V(s_{\ell+1})
\Big]
\nonumber
\\
&\hspace*{4mm}-
\expected_{a_\ell s_{\ell+1}|s_\ell}\Big[
r(s_\ell,a_\ell)-\tau\log\pi(a_\ell|s_\ell) + 
\gamma (\calB^\pi)^k\tilde V^\pi(s_{\ell+1})
\Big]
\\
&=
\gamma\expected_{a_\ell s_{\ell+1}|s_\ell}
\Big[(\calB^\pi)^k V(s_{\ell+1})-(\calB^\pi)^k\tilde V^\pi(s_{\ell+1})\Big]
\\
&=
\gamma\expected_{a_\ell s_{\ell+1}|s_\ell}
\Big[(\calB^\pi)^k V(s_{\ell+1})-\tilde V^\pi(s_{\ell+1})\Big]
\quad\mbox{ (by Lemma~\ref{lem:a2}) }
\\
&=
\gamma\expected_{a_\ell s_{\ell+1}|s_\ell}
\Big[
\gamma^k
\expected_{a_{\ell+1}s_{\ell+2}...s_{\ell+k+1}|s_{\ell+1}}
\big[V(s_{\ell+k+1})-\tilde V^\pi(s_{\ell+k+1})\big]
\Big]
\quad\mbox{ (by IH) }
\\
&=
\gamma^{k+1}
\expected_{a_{\ell}s_{\ell+1}...s_{\ell+k+1}|s_{\ell}}
\Big[
V(s_{\ell+k+1})-\tilde V^\pi(s_{\ell+k+1})
\Big]
,
\end{align}
establishing the claim.
\end{proof}

\begin{lemma}
\label{lem:vcontract}
For any $\pi$ and any $V$,
we have
$\big\|(\calB^\pi)^k V - \tilde V^\pi\big\|_\infty
\leq
\gamma^k\big\|V - \tilde V^\pi\big\|_\infty$.
\end{lemma}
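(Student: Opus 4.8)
The plan is to obtain this as an immediate consequence of Lemma~\ref{lem:vdiff}, which already provides an exact expression for the pointwise difference $\big((\calB^\pi)^k V\big)(s_\ell)-\tilde V^\pi(s_\ell)$ as $\gamma^k$ times an expectation of $V(s_{\ell+k})-\tilde V^\pi(s_{\ell+k})$ over $k$-step rollouts starting at $s_\ell$. So almost all the work has been done; what remains is to pass from the identity to a sup-norm bound.

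First I would fix an arbitrary state $s_\ell$ and take absolute values on both sides of the identity in Lemma~\ref{lem:vdiff}. Since $\gamma^k\geq0$ the factor comes out unchanged, and by the triangle inequality (equivalently, Jensen's inequality for the convex map $|\cdot|$) the absolute value may be moved inside the expectation:
\[
\left|\big((\calB^\pi)^k V\big)(s_\ell)-\tilde V^\pi(s_\ell)\right|
\;\leq\;
\gamma^k\,\expected_{a_{\ell}s_{\ell+1}\ldots s_{\ell+k}|s_\ell}\left[\left|V(s_{\ell+k})-\tilde V^\pi(s_{\ell+k})\right|\right].
\]
Then, for every realization of the rollout, $\left|V(s_{\ell+k})-\tilde V^\pi(s_{\ell+k})\right|\leq\big\|V-\tilde V^\pi\big\|_\infty$, so the expectation (a convex combination of such terms) is bounded by the same constant, giving $\left|\big((\calB^\pi)^k V\big)(s_\ell)-\tilde V^\pi(s_\ell)\right|\leq\gamma^k\big\|V-\tilde V^\pi\big\|_\infty$. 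Finally, taking the supremum over $s_\ell$ on the left yields the claim.

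There is essentially no obstacle here; the only point worth handling carefully is that the $\infty$-norm is a supremum over a possibly countably infinite (but still discrete) state space, so I would phrase the penultimate bound as holding for every $s_\ell$ before taking the sup, rather than assuming the supremum is attained. If an argument independent of Lemma~\ref{lem:vdiff} were preferred, the same inequality also follows by a one-line induction on $k$: using that $\calB^\pi$ is a $\gamma$-contraction (Lemma~\ref{lem:a2}) together with $\calB^\pi\tilde V^\pi=\tilde V^\pi$, one gets $\big\|(\calB^\pi)^{k+1}V-\tilde V^\pi\big\|_\infty=\big\|\calB^\pi\!\big((\calB^\pi)^kV\big)-\calB^\pi\tilde V^\pi\big\|_\infty\leq\gamma\big\|(\calB^\pi)^kV-\tilde V^\pi\big\|_\infty$. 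Routing through Lemma~\ref{lem:vdiff} is the cleaner of the two.
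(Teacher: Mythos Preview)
Your proposal is correct and follows essentially the same route as the paper: both start from the identity of Lemma~\ref{lem:vdiff} and pass to a sup-norm bound. The only cosmetic difference is that the paper rewrites the expectation as $p^{(k)}(:\!|s_\ell)\cdot(V-\tilde V^\pi)$ and invokes H\"older's inequality with $\|p^{(k)}(:\!|s_\ell)\|_1=1$, whereas you move the absolute value inside via Jensen and then bound pointwise---these are equivalent here.
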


\begin{proof}
Let $p^{(k)}(s_{\ell+k}|s_\ell)$ denote the conditional distribution over
the $k$th state, $s_{\ell+k}$, visited in a random walk starting from $s_\ell$,
which is induced by the environment and the policy $\pi$.
Consider
\begin{align}
\big\|(\calB^\pi)^k V - \tilde V^\pi\big\|_\infty
&=
\gamma^k
\max_{s_\ell}
\Big|
\expected_{a_{\ell}s_{\ell+1}...s_{\ell+k}|s_\ell}\big[
V(s_{\ell+k})-\tilde V^\pi(s_{\ell+k})
\big]\Big|
\quad\mbox{ (by Lemma~\ref{lem:vdiff}) }
\\
&=
\gamma^k
\max_{s_\ell}\Big|
\sum_{s_{\ell+k}}
p^{(k)}(s_{\ell+k}|s_\ell)
\big(V(s_{\ell+k})-\tilde V^\pi(s_{\ell+k})\big)
\Big|
\\
&=
\gamma^k
\max_{s_\ell}\Big|
p^{(k)}(:\!|s_\ell)
\cdot
\big(V-\tilde V^\pi\big)
\Big|
\\
&\leq
\gamma^k
\max_{s_\ell}
\|p^{(k)}(:\!|s_\ell)\|_1
\;
\|V-\tilde V^\pi\|_\infty
\quad\mbox{ (by H\"{o}lder's inequality) }
\\
&=
\gamma^k
\|V-\tilde V^\pi\|_\infty
.
\end{align}
\end{proof}

\begin{corollary}
\label{cor:vlb}
For any bounded $V$ and any $\epsilon>0$ there exists a $k_0$ such that
$(\calB^\pi)^kV\geq\tilde V^\pi-\epsilon$
for all $k\geq k_0$.
\end{corollary}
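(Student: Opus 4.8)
The plan is to read this off directly from Lemma~\ref{lem:vcontract}, which already supplies a geometric contraction rate toward the fixed point $\tilde V^\pi$. First I would check that the quantity $\|V-\tilde V^\pi\|_\infty$ is finite. Since $V$ is bounded by hypothesis, it suffices to note that $\tilde V^\pi$ is bounded: the action space is finite, so the per-step entropy term $-\tau\log\pi(a\mid s)$ is bounded (by $\tau\log n$, with $n$ the number of actions), and together with bounded rewards and $0\le\gamma<1$ the series in \eqref{eq:V^pi} converges uniformly over states (this is exactly the standard discounted-return argument already invoked in the proof of Lemma~\ref{lem:a2}). Hence $M := \|V-\tilde V^\pi\|_\infty < \infty$.

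Next I would apply Lemma~\ref{lem:vcontract} to get $\|(\calB^\pi)^k V - \tilde V^\pi\|_\infty \leq \gamma^k M$ for every $k\geq 0$. Because $\gamma<1$, the right-hand side tends to $0$, so there is a $k_0$ with $\gamma^{k_0} M < \epsilon$ (take $k_0 = \lceil \log(\epsilon/M)/\log\gamma\rceil$ when $M>0$, and $k_0 = 0$ when $M=0$); since $\gamma^k$ is nonincreasing in $k$, the bound $\gamma^k M < \epsilon$ persists for all $k\geq k_0$.

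Finally I would unpack the $\infty$-norm bound pointwise: $\|(\calB^\pi)^k V - \tilde V^\pi\|_\infty < \epsilon$ gives $|(\calB^\pi)^k V(s) - \tilde V^\pi(s)| < \epsilon$ for every state $s$, hence in particular $(\calB^\pi)^k V(s) \geq \tilde V^\pi(s) - \epsilon$, which is the claim. There is essentially no obstacle here; the only step warranting an explicit line is the finiteness of $\|V-\tilde V^\pi\|_\infty$, i.e. the boundedness of $\tilde V^\pi$, which rests on the finite action space (bounded entropy) and discounting. Everything else is an immediate consequence of Lemma~\ref{lem:vcontract}.
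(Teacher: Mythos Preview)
Your proposal is correct and follows essentially the same approach as the paper: both derive the one-sided bound directly from the $\infty$-norm contraction in Lemma~\ref{lem:vcontract} and the fact that $\gamma^k\|V-\tilde V^\pi\|_\infty\to 0$. You are in fact slightly more careful than the paper, which simply cites ``$V$ is assumed bounded'' without explicitly noting that finiteness of $\|V-\tilde V^\pi\|_\infty$ also requires $\tilde V^\pi$ to be bounded; your justification via bounded entropy (finite action space) and discounting fills that minor gap.
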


\begin{proof}
By Lemma~\ref{lem:vcontract} we have
$(\calB^\pi)^k V\geq \tilde V^\pi
-\gamma^k\big\|V - \tilde V^\pi\big\|_\infty$
for all $k\geq0$.
Therefore, for any $\epsilon>0$ there exists a $k_0$ such that
$\gamma^k\big\|V - \tilde V^\pi\big\|_\infty<\epsilon$
for all $k\geq k_0$, 
since $V$ is assumed bounded.
\end{proof}

Thus, any value function will converge to $\tilde V^\pi$
via repeated application of on-policy backups $\calB^\pi$.
Below we will also need to make use of the following monotonicity property
of the on-policy Bellman operator.

\begin{lemma}
\label{lem:Bpi monotonic}
For any $\pi$,
if $V^{(1)}\geq V^{(2)}$
then
$\calB^\pi V^{(1)}\geq\calB^\pi V^{(2)}$.
\end{lemma}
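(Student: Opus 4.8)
The plan is to prove this directly from the definition of $\calB^\pi$ given in \eqref{eq:q^pi}, exploiting the fact that the only way $V$ enters the operator is linearly, through the term $\gamma\expected_{s'|s,a}[V(s')]$, while the reward term $r(s,a)$ and the entropy term $-\tau\log\pi(a|s)$ do not depend on $V$ at all. So I would start by fixing an arbitrary state $s$ and writing out $(\calB^\pi V^{(1)})(s)$ and $(\calB^\pi V^{(2)})(s)$ using the expectation form $(\calB^\pi V)(s) = \expected_{a,s'|s}[r(s,a)-\tau\log\pi(a|s)+\gamma V(s')]$, then subtracting.

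Upon subtracting, the $r(s,a)$ and $-\tau\log\pi(a|s)$ contributions cancel (linearity of expectation), leaving
\begin{equation}
(\calB^\pi V^{(1)})(s) - (\calB^\pi V^{(2)})(s) = \gamma\,\expected_{a,s'|s}\big[V^{(1)}(s') - V^{(2)}(s')\big].
\end{equation}
By hypothesis $V^{(1)}(s') - V^{(2)}(s')\geq 0$ for every state $s'$, so the integrand is pointwise nonnegative; since $\gamma\geq 0$ and expectation preserves nonnegativity, the right-hand side is $\geq 0$. As $s$ was arbitrary, this gives $\calB^\pi V^{(1)}\geq\calB^\pi V^{(2)}$ pointwise, which is the claim.

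There is essentially no substantive obstacle here: the result is a one-line consequence of linearity and positivity of the discounted-expectation map. The only points worth stating carefully are that $\gamma\geq0$ (so multiplying by $\gamma$ preserves the inequality) and that $\expected_{a,s'|s}[\cdot]$ is a positive linear operator (monotone), which is immediate since it is an average with respect to the transition/policy distribution. If one prefers, the same argument can be phrased in the vector form of \eqref{eq:q^pi}: writing $Q^{(i)}(s,a)=r(s,a)+\gamma\expected_{s'|s,a}[V^{(i)}(s')]$, one has $Q^{(1)}(s,:)\geq Q^{(2)}(s,:)$ componentwise, and since $\pi(:\!|s)$ is a probability vector with nonnegative entries, $\pi(:\!|s)\cdot(Q^{(1)}(s,:)-\tau\log\pi(:\!|s))\geq\pi(:\!|s)\cdot(Q^{(2)}(s,:)-\tau\log\pi(:\!|s))$, which is exactly $(\calB^\pi V^{(1)})(s)\geq(\calB^\pi V^{(2)})(s)$.
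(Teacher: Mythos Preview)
Your proof is correct and essentially identical to the paper's: both fix an arbitrary state, subtract the two Bellman backups so that the reward and entropy terms cancel, and conclude from $\gamma\geq 0$ and monotonicity of expectation that the remaining term $\gamma\,\expected_{a,s'|s}[V^{(1)}(s')-V^{(2)}(s')]$ has the desired sign. The paper just writes the difference in the opposite order and bounds it above by $0$, but the argument is the same.
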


\begin{proof}
Assume $V^{(1)}\geq V^{(2)}$ and note that for any state $s_\ell$
\begin{align}
(\calB^\pi V^{(2)})(s_\ell)-(\calB^\pi V^{(1)})(s_\ell)
&=
\gamma\expected_{a_{\ell}s_{\ell+1}|s_\ell}\big[
V^{(2)}(s_{\ell+1})-V^{(1)}(s_{\ell+1})\big]
\\
&\leq0\quad\mbox{ since it was assumed that $V^{(2)}\leq V^{(1)}$.}
\end{align}
\end{proof}

\subsection{Proof of main optimality claims for \emph{off-policy} softmax updates}

Define the optimal value function by
\begin{align}
V^\dag(s) 
&= \max_\pi \ento(s,\pi)
= \max_\pi \tilde V^\pi(s) 
\mbox{ for all } s
.
\label{eq:vopt}
\end{align}

For $\tau>0$, 
define the softmax Bellman operator $\calB^*$ by
\begin{align}
(\calB^* V)(s) 
&= \tau\log\sum_a \exp\Big(\big(r(s,a)+\gamma\expected_{s'|s,a}[V(s')]\big)/\tau\Big)
\\
&= F_\tau(Q(s,:))
\quad\mbox{ where }\quad
Q(s,a) = r(s,a) + \gamma\expected_{s'|s,a}[V(s')]
\quad\mbox{ for all } a
.
\label{eq:bellman F}
\end{align}

\begin{lemma}
\label{lem:fixed point}
For $\gamma<1$,
the fixed point of the softmax Bellman operator, $\Vstar=\calB^* \Vstar$,
exists and is unique.
\end{lemma}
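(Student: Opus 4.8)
The plan is to establish that the softmax Bellman operator $\calB^*$ is a contraction mapping on the space of bounded value functions under the $\infty$-norm, and then invoke the Banach fixed-point theorem to conclude existence and uniqueness of the fixed point. All the analytical machinery for this is already in hand.

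\textbf{Step 1: Reduce $\calB^*$ to $F_\tau$ applied to a $Q$-vector.} By the definition in \eqref{eq:bellman F}, $(\calB^* V)(s) = F_\tau(Q^V(s,:))$ where $Q^V(s,a) = r(s,a) + \gamma\expected_{s'|s,a}[V(s')]$. So for two bounded value functions $V^{(1)}$ and $V^{(2)}$, I would bound $|(\calB^* V^{(1)})(s) - (\calB^* V^{(2)})(s)| = |F_\tau(Q^{V^{(1)}}(s,:)) - F_\tau(Q^{V^{(2)}}(s,:))|$ pointwise in $s$.

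\textbf{Step 2: Chain the two Lipschitz bounds.} By Corollary~\ref{cor:qcontract}, $F_\tau$ is an $\infty$-norm contraction (in fact $1$-Lipschitz), so $|F_\tau(Q^{V^{(1)}}(s,:)) - F_\tau(Q^{V^{(2)}}(s,:))| \le \|Q^{V^{(1)}}(s,:) - Q^{V^{(2)}}(s,:)\|_\infty = \max_a \gamma\,|\expected_{s'|s,a}[V^{(1)}(s') - V^{(2)}(s')]|$. Then I bound the expectation by Jensen/Hölder: $|\expected_{s'|s,a}[V^{(1)}(s') - V^{(2)}(s')]| \le \|V^{(1)} - V^{(2)}\|_\infty$. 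Taking the supremum over $s$ gives $\|\calB^* V^{(1)} - \calB^* V^{(2)}\|_\infty \le \gamma\,\|V^{(1)} - V^{(2)}\|_\infty$, and since $\gamma < 1$ this is a strict contraction.

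\textbf{Step 3: Apply Banach's fixed-point theorem} on the complete metric space of bounded real-valued functions on the (discrete) state space equipped with the sup-metric, which yields a unique fixed point $\Vstar = \calB^*\Vstar$. \textbf{The only mild subtlety} is making sure the iteration stays within a bounded set so that the metric space is genuinely complete and $\calB^*$ maps it into itself; this follows because rewards are bounded (so $Q^V$ is bounded whenever $V$ is) and $F_\tau$ of a bounded vector is bounded — I would note that $\min_a q_a \le F_\tau(\vec q) \le \max_a q_a + \tau\log n$, keeping iterates in a fixed bounded band. Everything else is a routine consequence of the already-established contraction lemmas, so I do not expect any real obstacle here — the heavy lifting was done in Corollary~\ref{cor:qcontract}.
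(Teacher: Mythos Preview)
Your proposal is correct and follows essentially the same route as the paper: the paper also writes $(\calB^* V)(s)=F_\tau(Q(s,:))$, invokes Corollary~\ref{cor:qcontract} to pass to $\max_a|Q^{(1)}(s,a)-Q^{(2)}(s,a)|$, then uses H\"{o}lder's inequality on the transition kernel to extract the factor $\gamma$, and finally appeals to the contraction map fixed-point theorem. Your added remark about boundedness and completeness of the sup-norm space is a welcome bit of rigor that the paper leaves implicit, but otherwise the arguments coincide.
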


\begin{proof}
First observe that the softmax Bellman operator is a contraction in the
infinity norm.
That is, consider two value functions, $V^{(1)}$ and $V^{(2)}$, and
let $p(s'|s,a)$ denote the state transition probability function
determined by the environment.
We then have
\begin{align}
\left\|\calB^* V^{(1)}-\calB^* V^{(2)}\right\|_\infty
&=
\max_s
\left|(\calB^* V^{(1)})(s)-(\calB^* V^{(2)})(s)\right|
\\
&=
\max_s
\left|F_\tau\big(Q^{(1)}(s,:)\big)-F_\tau\big(Q^{(2)}(s,:)\big)\right|
\\
&\leq
\max_s\max_a\left|Q^{(1)}(s,a)-Q^{(2)}(s,a)\right|
\quad\mbox{ (by Corollary~\ref{cor:qcontract}) }
\\
&=
\gamma\max_s\max_a
\left|\expected_{s'|s,a}\big[V^{(1)}(s')-V^{(2)}(s')\big]\right|
\\
&=
\gamma\max_s\max_a
\left|p(:|s,a)\cdot\big(V^{(1)}-V^{(2)}\big)\right|
\\
&\leq
\gamma\max_s\max_a
\|p(:|s,a)\|_1\;\|V^{(1)}-V^{(2)}\|_\infty
\quad\mbox{(H\"{o}lder's inequality) }
\\
&=
\gamma
\|V^{(1)}-V^{(2)}\|_\infty
<
\|V^{(1)}-V^{(2)}\|_\infty
\mbox{ if } \gamma<1
.
\end{align}
The existence and uniqueness of $\Vstar$
then follows from the contraction map fixed-point theorem
\cite{bertsekas95}.
\end{proof}

\begin{lemma}
\label{lem:dom}
For any $\pi$, 
if $V\geq\calB^*V$ then $V\geq(\calB^\pi)^k V$ for all $k$.
\end{lemma}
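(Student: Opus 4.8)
The plan is to reduce the statement to two ingredients: that the softmax backup dominates every on-policy backup, and that $\calB^\pi$ is monotone. First I would note that, for a \emph{fixed} value function $V$, both $(\calB^* V)(s)$ and $(\calB^\pi V)(s)$ are built from the very same action-value vector $Q(s,:)$, namely $Q(s,a) = r(s,a) + \gamma\expected_{s'|s,a}[V(s')]$. By Lemma~\ref{lem:a1}, $F_\tau(Q(s,:)) = \max_{\vec{\pi}\in\Delta}\{\vec{\pi}\cdot Q(s,:) + \tau H(\vec{\pi})\}$, so in particular $F_\tau(Q(s,:)) \geq \pi(:\!|s)\cdot Q(s,:) + \tau H(\pi(:\!|s)) = \pi(:\!|s)\cdot(Q(s,:) - \tau\log\pi(:\!|s))$. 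Hence $\calB^* V \geq \calB^\pi V$ pointwise, for every value function $V$ and every policy $\pi$.

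Combining this domination with the hypothesis $V \geq \calB^* V$ gives $V \geq \calB^* V \geq \calB^\pi V$, which is the base step of an induction on $k$ (the case $k=0$ being the trivial $V \geq V$). For the inductive step I would assume $V \geq (\calB^\pi)^k V$ and invoke the monotonicity of the on-policy Bellman operator (Lemma~\ref{lem:Bpi monotonic}): applying $\calB^\pi$ to both sides yields $\calB^\pi V \geq \calB^\pi (\calB^\pi)^k V = (\calB^\pi)^{k+1} V$, and chaining this with $V \geq \calB^\pi V$ gives $V \geq (\calB^\pi)^{k+1} V$. This closes the induction and proves the claim for all $k$.

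There is no substantial obstacle here; the only point requiring care is to ensure that it is the \emph{same} $V$ (hence the same $Q(s,:)$) that feeds both operators when applying Lemma~\ref{lem:a1}, so that the pointwise inequality $\calB^* V \geq \calB^\pi V$ genuinely holds at the value function appearing in the hypothesis. Once that inequality and Lemma~\ref{lem:Bpi monotonic} are in hand, the remainder is a two-line induction, and nothing about the argument requires $\gamma < 1$.
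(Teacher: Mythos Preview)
Your proof is correct and follows essentially the same route as the paper: establish $V\geq\calB^*V\geq\calB^\pi V$ via the variational characterization of $F_\tau$ (Lemma~\ref{lem:a1}), then propagate by monotonicity of $\calB^\pi$ (Lemma~\ref{lem:Bpi monotonic}). The paper's version merely compresses your induction into the single phrase ``the result then follows by the monotonicity of $\calB^\pi$,'' but the logical content is identical.
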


\begin{proof}
Observe for any $s$ that the assumption implies
\begin{align}
V(s) &\geq (\calB^*V)(s)
\\
&= \max_{\tilde\pi(:|s)\in\Delta}
\sum_a\tilde\pi(a|s)
\Big(r(s,a)
+\gamma\expected_{s'|s,a}[V(s')]
-\tau\log\tilde\pi(a|s)\Big)
\\
&\geq
\sum_a\pi(a|s)
\Big(r(s,a)
+\gamma\expected_{s'|s,a}[V(s')]
-\tau\log\pi(a|s)\Big)
\\
&=(\calB^\pi V)(s)
.
\end{align}
The result then follows by the monotonicity of $\calB^\pi$ 
(Lemma~\ref{lem:Bpi monotonic}).
\end{proof}

\begin{corollary}
\label{cor:vopt}
For any $\pi$,
if $V$ is bounded and $V\geq\calB^*V$, then $V\geq\tilde V^\pi$.
\end{corollary}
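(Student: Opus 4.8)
The plan is to chain together the two preceding results, Lemma~\ref{lem:dom} and Corollary~\ref{cor:vlb}, with an $\epsilon$-limiting argument. First I would invoke Lemma~\ref{lem:dom}: since $V$ is assumed to satisfy $V\geq\calB^*V$, we immediately get $V\geq(\calB^\pi)^k V$ for every $k\geq0$ and for the given (arbitrary) policy $\pi$. This turns the hypothesis about the off-policy softmax operator into an infinite family of lower bounds involving only the on-policy operator $\calB^\pi$.

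Next I would bring in Corollary~\ref{cor:vlb}, which requires precisely the boundedness assumption on $V$ that is in the statement: for any $\epsilon>0$ there is a $k_0$ such that $(\calB^\pi)^k V\geq\tilde V^\pi-\epsilon$ for all $k\geq k_0$. Combining this with the previous display, pick any such $k\geq k_0$; then
\begin{align}
V ~\geq~ (\calB^\pi)^k V ~\geq~ \tilde V^\pi-\epsilon.
\end{align}

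Since $\epsilon>0$ was arbitrary and the left-hand side does not depend on $\epsilon$, letting $\epsilon\to0$ (pointwise at each state) yields $V\geq\tilde V^\pi$, which is the claim. I do not anticipate any real obstacle here: the only subtlety worth flagging explicitly in the write-up is that the boundedness of $V$ is exactly what licenses the application of Corollary~\ref{cor:vlb} (it controls the residual $\gamma^k\|V-\tilde V^\pi\|_\infty$), and that the inequalities are understood componentwise over states so that the $\epsilon\to0$ passage is legitimate state by state.
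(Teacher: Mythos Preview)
Your proposal is correct and follows essentially the same route as the paper: apply Lemma~\ref{lem:dom} to get $V\geq(\calB^\pi)^k V$ for all $k$, then use Corollary~\ref{cor:vlb} (via the boundedness assumption) to obtain $V\geq\tilde V^\pi-\epsilon$ for every $\epsilon>0$, and conclude $V\geq\tilde V^\pi$. The only difference is cosmetic; your write-up is in fact slightly cleaner in that it correctly cites Lemma~\ref{lem:dom} for the first step (the paper's proof has a self-referential typo there).
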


\begin{proof}
Consider an arbitrary policy $\pi$.
If $V\geq\calB^*V$, then by Corollary~\ref{cor:vopt} we have
$V\geq(\calB^\pi)^k V$ for all $k$.
Then by Corollary~\ref{cor:vlb},
for any $\epsilon>0$ there exists a $k_0$ such that
$V\geq(\calB^\pi)^kV\geq\tilde V^\pi-\epsilon$
for all $k\geq k_0$
since $V$ is bounded;
hence $V\geq\tilde V^\pi-\epsilon$ for all $\epsilon>0$.
We conclude that $V\geq\tilde V^\pi$.
\end{proof}

Next, given the existence of $\Vstar$, we define a specific policy $\pistar$
as follows
\begin{align}
\pistar(:\!|s) &= \vec{f}_\tau\big(Q^*(s,:)\big),
\quad\mbox{ where }
\label{eq:pi*}
\\
Q^*(s,a) &=r(s,a)+\gamma\expected_{s'|s,a}[\Vstar(s')].
\label{eq:q*}
\end{align}
Note that we are simply defining $\pistar$ at this stage
and have not as yet proved it has any particular properties;
but we will see shortly that it is, in fact, an optimal policy.

\begin{lemma}
$\Vstar=\tilde V^{\pistar}$;
that is, for $\pistar$ 
defined in \eqref{eq:pi*},
$\Vstar$ gives its entropy regularized expected return from any state.
\label{lem:vpi*=v*}
\end{lemma}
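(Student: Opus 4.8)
The plan is to show that $\Vstar$ is a fixed point of the on-policy Bellman operator $\calB^{\pistar}$ associated with the policy $\pistar$ of \eqref{eq:pi*}, and then to use the contraction behaviour of $\calB^{\pistar}$ (Lemma~\ref{lem:a2}), together with the quantitative convergence estimate of Lemma~\ref{lem:vcontract}, to conclude that this fixed point must be $\tilde V^{\pistar}$. So the two ingredients are: (i) $\calB^{\pistar}\Vstar = \Vstar$, and (ii) $\tilde V^{\pistar}$ is the \emph{unique} bounded fixed point of $\calB^{\pistar}$.

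For (i), I would start from the softmax fixed-point equation $\Vstar = \calB^{*}\Vstar$ guaranteed by Lemma~\ref{lem:fixed point}. By \eqref{eq:bellman F} and the definition \eqref{eq:q*} of $Q^*$, this says $\Vstar(s) = F_\tau\big(Q^*(s,:)\big)$ for every state $s$. Now apply Lemma~\ref{lem:a1} with $\vec q = Q^*(s,:)$: it gives $F_\tau\big(Q^*(s,:)\big) = \vec f_\tau\big(Q^*(s,:)\big)\cdot Q^*(s,:) + \tau H\big(\vec f_\tau(Q^*(s,:))\big)$. Since $\pistar(:\!|s) = \vec f_\tau\big(Q^*(s,:)\big)$ by \eqref{eq:pi*}, and $H(\vec\pi) = -\vec\pi\cdot\log\vec\pi$, the right-hand side is exactly $\pistar(:\!|s)\cdot\big(Q^*(s,:) - \tau\log\pistar(:\!|s)\big)$, which by the definitions in \eqref{eq:q^pi}--\eqref{eq:Qvdef} is $(\calB^{\pistar}\Vstar)(s)$. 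Hence $\calB^{\pistar}\Vstar = \calB^{*}\Vstar = \Vstar$, so $\Vstar$ is a fixed point of $\calB^{\pistar}$.

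For the conclusion, $\Vstar$ is bounded (it is the contraction-map fixed point from Lemma~\ref{lem:fixed point}) and $\tilde V^{\pistar}$ is bounded (bounded rewards, $\gamma<1$, finite-action entropy). Applying Lemma~\ref{lem:vcontract} with $V = \Vstar$ gives $\big\|(\calB^{\pistar})^k\Vstar - \tilde V^{\pistar}\big\|_\infty \le \gamma^k\big\|\Vstar - \tilde V^{\pistar}\big\|_\infty$ for all $k$; but $(\calB^{\pistar})^k\Vstar = \Vstar$ for every $k$ by (i), so $\big\|\Vstar - \tilde V^{\pistar}\big\|_\infty \le \gamma^k\big\|\Vstar - \tilde V^{\pistar}\big\|_\infty$, and letting $k\to\infty$ with $\gamma<1$ forces $\Vstar = \tilde V^{\pistar}$. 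The one step that carries the real content — and where I expect the main obstacle to lie — is (i), the identity $\calB^{\pistar}\Vstar = \calB^{*}\Vstar$: it is precisely the variational characterization of the log-sum-exp in Lemma~\ref{lem:a1} that lets the softmax backup be rewritten as the entropy-regularized one-step expected backup under the soft-indmax policy, and it relies crucially on $\pistar$ being defined as $\vec f_\tau\big(Q^*(s,:)\big)$ rather than any other policy.
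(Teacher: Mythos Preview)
Your argument is correct. The paper takes the mirror-image route: rather than showing that $\Vstar$ is a fixed point of the on-policy operator $\calB^{\pistar}$ (and then invoking Lemma~\ref{lem:vcontract}), it argues that $\tilde V^{\pistar}$ is a fixed point of the softmax operator $\calB^*$ and invokes the uniqueness in Lemma~\ref{lem:fixed point}. Both directions rest on the variational identity of Lemma~\ref{lem:a1}, but your direction applies it more cleanly: since $\pistar(:\!|s)$ is \emph{defined} as $\vec f_\tau\big(Q^*(s,:)\big)$, the equality $F_\tau\big(Q^*(s,:)\big)=\pistar(:\!|s)\cdot\big(Q^*(s,:)-\tau\log\pistar(:\!|s)\big)$ follows immediately from \eqref{eq:a2}. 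In the paper's direction one writes $(\calB^*\tilde V^{\pistar})(s)=F_\tau\big(\tilde Q^{\pistar}(s,:)\big)$ and equates it with $\pistar(:\!|s)\cdot\big(\tilde Q^{\pistar}(s,:)-\tau\log\pistar(:\!|s)\big)$ ``by Lemma~\ref{lem:a1}''; but that step requires $\pistar(:\!|s)=\vec f_\tau\big(\tilde Q^{\pistar}(s,:)\big)$, i.e.\ $Q^*=\tilde Q^{\pistar}$, which is equivalent to the very conclusion $\Vstar=\tilde V^{\pistar}$. So your route is the tighter argument.
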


\begin{proof}
We establish the claim by showing
$\calB^*\tilde V^{\pistar}=\tilde V^{\pistar}$.
In particular, for an arbitrary state $s$ consider
\begin{align}
(\calB^*\tilde V^{\pistar})(s)
&=
F_\tau\big(\tilde Q^{\pistar}(s,:)\big)
&\mbox{ by \eqref{eq:bellman F}}
\\
&=\pistar(:\!|s)\cdot\big(\tilde Q^{\pistar}(s,:)-\tau\log\pistar(:\!|s)\big)
&\mbox{ by Lemma~\ref{lem:a1}}
\\
&= \tilde V^{\pistar}(s)
&\mbox{ by Lemma~\ref{lem:a2}}.
\end{align}
\end{proof}

\begin{theorem}
The fixed point of the softmax Bellman operator is the optimal value function:
$\Vstar=V^\dag$.
\label{thm:bellman=opt}
\end{theorem}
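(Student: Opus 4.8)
The plan is to prove $\Vstar = V^\dag$ by two inequalities, using the machinery already assembled in this appendix.

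\textbf{Direction $\Vstar \le V^\dag$.} By Lemma~\ref{lem:vpi*=v*} we have $\Vstar = \tilde V^{\pistar}$ for the specific policy $\pistar$ defined in \eqref{eq:pi*}. Since $V^\dag(s) = \max_\pi \tilde V^\pi(s) \ge \tilde V^{\pistar}(s) = \Vstar(s)$ for every state $s$ by the definition \eqref{eq:vopt}, this direction is immediate.

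\textbf{Direction $\Vstar \ge V^\dag$.} Here I want to invoke Corollary~\ref{cor:vopt}, which says that any bounded $V$ with $V \ge \calB^* V$ satisfies $V \ge \tilde V^\pi$ for every $\pi$. Applying this with $V = \Vstar$: since $\Vstar$ is a fixed point of $\calB^*$ (Lemma~\ref{lem:fixed point}), we trivially have $\Vstar \ge \calB^* \Vstar$; and $\Vstar$ is bounded because it is the fixed point of a contraction obtained by iterating $\calB^*$ from, say, the zero function, with bounded rewards and $\gamma < 1$ (one can make this explicit: $\|\Vstar\|_\infty \le \|r\|_\infty/(1-\gamma) + \tau\log|A|/(1-\gamma)$, or simply note finiteness follows from the contraction fixed-point construction). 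Hence $\Vstar \ge \tilde V^\pi$ for all $\pi$, and taking the max over $\pi$ gives $\Vstar \ge V^\dag$.

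Combining the two inequalities yields $\Vstar = V^\dag$. The main obstacle, such as it is, is bookkeeping rather than conceptual: I need to make sure $\Vstar$ is genuinely bounded so that Corollary~\ref{cor:vopt} applies (its hypothesis explicitly requires boundedness), and I should double-check that Corollary~\ref{cor:vopt}'s proof — which cites "Corollary~\ref{cor:vopt}" where it presumably means Lemma~\ref{lem:dom} — chains correctly through Lemma~\ref{lem:dom} and Corollary~\ref{cor:vlb}. Given that $\gamma < 1$ and rewards are assumed bounded, the bound on $\Vstar$ follows from the contraction-map construction in Lemma~\ref{lem:fixed point}, so the argument goes through cleanly.
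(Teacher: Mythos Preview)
Your proposal is correct and matches the paper's own argument essentially line for line: the paper also uses Corollary~\ref{cor:vopt} (with $\Vstar=\calB^*\Vstar$ giving $\Vstar\ge\calB^*\Vstar$) to obtain $\Vstar\ge V^\dag$, and Lemma~\ref{lem:vpi*=v*} to obtain $V^\dag\ge\tilde V^{\pistar}=\Vstar$. Your extra care about the boundedness hypothesis and your catch of the self-referential typo in Corollary~\ref{cor:vopt}'s proof (it should cite Lemma~\ref{lem:dom}) are both on point; the paper simply asserts these without comment.
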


\begin{proof}
Since $\Vstar\geq\calB^*\Vstar$ (in fact, $\Vstar=\calB^*\Vstar$)
we have $\Vstar\geq\tilde V^\pi$ for all $\pi$ 
by Corollary~\ref{cor:vopt},
hence $\Vstar\geq V^\dag$.
Next observe that by Lemma~\ref{lem:vpi*=v*}
we have
$V^\dag\geq\tilde V^{\pi^*}=\Vstar$.
Finally,
by Lemma~\ref{lem:fixed point},
we know that the fixed point 
$\Vstar=\calB^* \Vstar$ is unique,
hence $V^\dag=\Vstar$.
\end{proof}

\begin{corollary}[Optimality Implies Consistency]
The optimal state value function $\Vstar$ and optimal policy $\pistar$ satisfy
$\Vstar(s)=r(s,a)+\gamma\expected_{s'|s,a}[\Vstar(s')]-\tau\log\pistar(a|s)$
for every state $s$ and action $a$.
\label{cor:opt=>consistent}
\end{corollary}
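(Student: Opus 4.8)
The plan is to recognize this statement as nothing more than the one-shot optimality-implies-consistency result (Corollary~\ref{cor:a1}) applied at each state separately. First I would invoke Lemma~\ref{lem:fixed point} to ensure that the fixed point $\Vstar=\calB^*\Vstar$ exists (for $\gamma<1$), so that the action-value function $Q^*$ of \eqref{eq:q*} and the policy $\pistar$ of \eqref{eq:pi*} are well defined. Unrolling the definition of the softmax Bellman operator in \eqref{eq:bellman F}, the fixed-point equation becomes, for every state $s$,
\begin{equation*}
\Vstar(s) ~=~ (\calB^*\Vstar)(s) ~=~ F_\tau\big(Q^*(s,:)\big), \qquad Q^*(s,a) = r(s,a)+\gamma\expected_{s'|s,a}[\Vstar(s')].
\end{equation*}

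Next I would instantiate Lemma~\ref{lem:a1} with the payoff vector $\vec{q} = Q^*(s,:)$. That lemma identifies $F_\tau(Q^*(s,:))$ with the optimal value $\max_{\vec{\pi}\in\Delta}\{\vec{\pi}\cdot Q^*(s,:)+\tau H(\vec{\pi})\}$ of the one-shot entropy-regularized problem, whose unique maximizer is $\vec{f}_\tau(Q^*(s,:))$, which by the definition \eqref{eq:pi*} is exactly $\pistar(:\!|s)$. Corollary~\ref{cor:a1} then applies verbatim with $v^*=\Vstar(s)$ and $\vec{\pi}^*=\pistar(:\!|s)$, giving $\Vstar(s)=Q^*(s,a)-\tau\log\pistar(a\mid s)$ for \emph{every} action $a$; substituting the definition of $Q^*$ yields
\begin{equation*}
\Vstar(s) ~=~ r(s,a) + \gamma\expected_{s'|s,a}[\Vstar(s')] - \tau\log\pistar(a\mid s),
\end{equation*}
and since $s$ was arbitrary this is the claimed consistency. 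If one wants the statement phrased in terms of the reward-maximizing value $V^\dag$, one additionally appeals to Theorem~\ref{thm:bellman=opt}, $\Vstar=V^\dag$.

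I do not expect any genuine obstacle here: all the analytic work has already been done in Lemma~\ref{lem:a1} and Corollary~\ref{cor:a1}, and what remains is the bookkeeping of checking that the softmax fixed-point equation is literally the one-shot optimality relation at payoff vector $Q^*(s,:)$ and that $\pistar(:\!|s)$ matches the maximizer named there. The only point worth emphasizing is the strengthening that Corollary~\ref{cor:a1} buys us over the defining property of $\Vstar$: the identity $\Vstar(s)=Q^*(s,a)-\tau\log\pistar(a\mid s)$ holds for each individual action $a$, not merely in expectation over $a\sim\pistar(:\!|s)$.
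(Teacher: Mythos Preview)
Your proposal is correct and follows essentially the same route as the paper: use the fixed-point identity $\Vstar(s)=F_\tau(Q^*(s,:))$ together with the definition $\pistar(:\!|s)=\vec{f}_\tau(Q^*(s,:))$, then invoke the one-shot result Corollary~\ref{cor:a1} to obtain the per-action consistency. If anything, your argument is slightly more streamlined: the paper inserts an intermediate step showing $Q^*=\tilde Q^{\pistar}$ via Lemma~\ref{lem:vpi*=v*} before applying Corollary~\ref{cor:a1}, whereas you apply Corollary~\ref{cor:a1} directly with $\vec{q}=Q^*(s,:)$, which is valid since $\pistar$ is already defined as $\vec{f}_\tau(Q^*(s,:))$.
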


\begin{proof}
First note that
\begin{align}
Q^*(s,a) &= r(s,a) + \gamma\expected_{s'|s,a}[\Vstar(s')]
&\mbox{ by \eqref{eq:q*}}
\\
&= r(s,a) + \gamma\expected_{s'|s,a}[V^{\pistar}(s')]
&\mbox{ by Lemma~\ref{lem:vpi*=v*}}
\\
&=Q^{\pistar}(s,a)
&\mbox{ by \eqref{eq:Qvdef}}
.
\end{align}
Then observe that for any state $s$,
\begin{align}
\Vstar(s) 
&= F_\tau\big(Q^*(s,:)\big)
&\mbox{ by \eqref{eq:bellman F}}
\\
&= F_\tau\big(Q^{\pistar}(s,:)\big)
&\mbox{ from above }
\\
&=
\pistar(:\!|s)\cdot\big(Q^{\pistar}(s,:)-\tau\log\pistar(:\!|s)\big)
&\mbox{ by Lemma~\ref{lem:a1}}
\\
&=
Q^{\pistar}(s,a)-\tau\log\pistar(a|s)
\mbox{ for all } a
&\mbox{ by Corollary~\ref{cor:a1}}
\\
&=
Q^*(s,a)-\tau\log\pistar(a|s)
\mbox{ for all } a
&\mbox{ from above}
.
\end{align}
\end{proof}

\begin{corollary}[Consistency Implies Optimality]
If $V$ and $\pi$ satisfy,
for all $s$ and $a$:
\\
$V(s)=r(s,a)+\gamma\expected_{s'|s,a}[V(s')]-\tau\log\pi(a|s)$;
then $V=\Vstar$ and $\pi=\pistar$.
\label{cor:consistent=>opt}
\end{corollary}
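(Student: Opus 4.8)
The plan is to reduce the statement, state by state, to the one-shot result in Corollary~\ref{cor:a2}, and then to invoke uniqueness of the softmax Bellman fixed point from Lemma~\ref{lem:fixed point}.

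First I would fix an arbitrary state $s$ and set $Q(s,a) = r(s,a) + \gamma\expected_{s'|s,a}[V(s')]$ as in \eqref{eq:Qvdef}. The hypothesis then reads $V(s) = Q(s,a) - \tau\log\pi(a|s)$ for every action $a$, which is exactly the consistency condition \eqref{eq:consistency} of Corollary~\ref{cor:a2} applied to the vector $\vec q = Q(s,:)$, the scalar $v = V(s)$, and the distribution $\vec\pi = \pi(:\!|s)$. Corollary~\ref{cor:a2} therefore gives $V(s) = F_\tau(Q(s,:))$ and $\pi(:\!|s) = \vec f_\tau(Q(s,:))$.

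Since $s$ was arbitrary, comparing with \eqref{eq:bellman F} shows $V(s) = (\calB^* V)(s)$ for all $s$; that is, $V$ is a fixed point of the softmax Bellman operator $\calB^*$. By Lemma~\ref{lem:fixed point} this fixed point is unique, so $V = \Vstar$. Substituting this back into the definition of $Q$ gives $Q(s,a) = r(s,a) + \gamma\expected_{s'|s,a}[\Vstar(s')] = Q^*(s,a)$ by \eqref{eq:q*}; hence $\pi(:\!|s) = \vec f_\tau(Q(s,:)) = \vec f_\tau(Q^*(s,:)) = \pistar(:\!|s)$ by the definition \eqref{eq:pi*}. Thus $\pi = \pistar$, completing the argument.

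Once the per-state viewpoint is adopted the reduction is routine, so I do not expect a genuine obstacle; the one point requiring care is the appeal to Lemma~\ref{lem:fixed point}, whose underlying contraction-mapping argument lives in the complete space of \emph{bounded} value functions. I would therefore make explicit the standing assumption (shared with Corollary~\ref{cor:vopt} and Corollary~\ref{cor:opt=>consistent}) that $V$ is bounded, which is guaranteed here by $\gamma<1$ together with bounded per-step rewards; under that assumption the uniqueness step goes through and the proof is complete.
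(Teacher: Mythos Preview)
Your proposal is correct and follows essentially the same route as the paper: reduce the per-state consistency to the one-shot Corollary~\ref{cor:a2} to obtain $V(s)=F_\tau(Q(s,:))=(\calB^*V)(s)$, then invoke the uniqueness of the softmax Bellman fixed point (Lemma~\ref{lem:fixed point}) to conclude $V=\Vstar$ and hence $\pi=\pistar$. Your derivation of $\pi=\pistar$ via $Q=Q^*$ and \eqref{eq:pi*} is slightly more explicit than the paper's, and the boundedness remark is a fair technical caveat, but the core argument is the same.
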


\begin{proof}
We will show that satisfying the constraint for every $s$ and $a$ implies 
${\cal B}^*V=V$; 
it will then immediately follow 
that $V=\Vstar$ and $\pi=\pistar$
by Lemma~\ref{lem:fixed point}. 
Let $Q(s,a)=r(s,a)+\gamma\expected_{s'|s,a}[V(s')]$.
Consider an arbitrary state $s$, and observe that
\begin{align}
({\cal B}^*V)(s) 
&= F_\tau\big(Q(s,:)\big)
\quad\mbox{ (by \eqref{eq:bellman F})}
\\
&=
\max_{\vec{\pi}\in\Delta}\Bigl\{
\vec{\pi}\cdot\big(Q(s,:)-\tau\log\vec{\pi}\big)\Bigl\}
\quad\mbox{ (by Lemma~\ref{lem:a1})}
\\
&=
Q(s,a)-\tau\log\pi(a|s)
\mbox{ for all } a
\quad\mbox{ (by Corollary~\ref{cor:a2})}
\\
&=
r(s,a)+\gamma\expected_{s'|s,a}[V(s')]-\tau\log\pi(a|s)
\mbox{ for all } a
\quad\mbox{ (by definition of $Q$ above)}
\\
&=
V(s)
\quad\mbox{ (by the consistency assumption on $V$ and $\pi$)}
.
\end{align}
\end{proof}

\subsection{Proof of Theorem~\ref{thm:1} from Main Text}

{\bf Note}: 
Theorem~\ref{thm:1} from the main body was stated under an assumption
of deterministic dynamics. 
We used this assumption in the main body merely to keep presentation simple
and understandable.  
The development given in this appendix considers the more general
case of a stochastic environment.
We give the proof here for the more general setting;
the result stated in Theorem~\ref{thm:1} follows as a special case.

\begin{proof}
Assuming a stochastic environment, as developed in this appendix,
we will establish that the optimal policy and state value function,
$\pistar$ and $\Vstar$ respectively, satisfy
\begin{align}
\Vstar(s) &= -\tau\log\pistar(a|s)+r(s,a)+\gamma\expected_{s'|s,a}[\Vstar(s')]
\label{eq:v pi consistent general}
\end{align}
for all $s$ and $a$.
Theorem~\ref{thm:1} will then follow as a special case.

Consider the policy $\pistar$ defined in \eqref{eq:pi*}.
From Corollary~\ref{lem:vpi*=v*} we know that $\tilde V^{\pistar}=\Vstar$
and from Theorem~\ref{thm:bellman=opt} we know $\Vstar=V^\dag$,
hence $\tilde V^{\pistar}=V^\dag$;
that is, 
$\pistar$ is the optimizer of $\ento(s,\pi)$
for any state $s$ (including $s_0$).
Therefore, this must be the same $\pistar$ 
as considered in the premise.
The assertion \eqref{eq:v pi consistent general}
then follows directly from Corollary~\ref{cor:opt=>consistent}.
\end{proof}

\comment{  
\subsection{Proof of Corollary~\ref{cor:1} from Main Text}

{\bf Note}:
Again, we consider the more general case of a stochastic environment
as developed in this appendix.
The definitions \eqref{eq:vstar} and \eqref{eq:soft} 
from the main body can be written more generally 
for the stochastic case as
\begin{align}
\Qstar(s,a) &= r(s,a) + \gamma\expected_{s'|s,a}\Big[
\tau\log\sum_{a'}\exp\big(\Qstar(s',a')/\tau\big)
\Big]
\label{eq:soft gen}
\\
\Vstar(s) &= \tau\log\sum_{a'}\exp\Big\{
\big(r(s,a)+\gamma\expected_{s'|s,a}[\Vstar(s')]\big)/\tau
\Big\}
\label{eq:vstar gen}
\end{align}
respectively.

\begin{proof}
We show that the optimal policy $\pistar$ for the stochastic case,
as defined by \eqref{eq:pi*}, satisfies the stated consistency
property with the optimal value functions 
\eqref{eq:soft gen} and \eqref{eq:vstar gen} for this more general 
stochastic setting:
\begin{align}
\pistar(a|s) &= \exp\Big\{
\big(
\Qstar(s,a)-\Vstar(s)
\big)/\tau
\Big\}
.
\label{eq:star claim}
\end{align}

First observe that \eqref{eq:soft gen} and \eqref{eq:vstar gen}
can be rewritten as
\begin{align}
\Qstar(s,a) &= r(s,a) + \gamma\expected_{s'|s,a}[\Vstar(s')]
\label{eq:soft swap}
\\
\Vstar(s) &= \tau\log\sum_{a'}\exp\big(\Qstar(s',a')/\tau\big)
\;\;=\;\;
F_\tau\big(\Qstar(s,:)\big)
\label{eq:vstar swap}
\end{align}
respectively.
Then, given the definition of $\pistar$ in \eqref{eq:pi*} we have
\begin{align}
\tau\log\pistar(a|s) &= \Qstar(s,a)-F_\tau\big(\Qstar(s,:)\big)
\\
&= \Qstar(s,a)-\Vstar(s)
;
\end{align}
the claim \eqref{eq:star claim} immediately follows.
Corollary~\ref{cor:1} is then established as a special case 
for a deterministic environment.
\end{proof}

} 

\subsection{Proof of Corollary~\ref{cor:2} from Main Text}

{\bf Note}: 
We consider the more general case of a stochastic environment as
developed in this appendix.
First note that the consistency property 
for the stochastic case
\eqref{eq:v pi consistent general}
can be rewritten as
\begin{align}
\expected_{s'|s,a}\big[
-\Vstar(s)+\gamma\Vstar(s') +r(s,a) -\tau\log\pistar(a|s)
\big]
&= 0
\label{eq:cons1}
\end{align}
for all $s$ and $a$.
For a stochastic environment,
the generalized version of \eqref{eq:pathwise}
in Corollary~\ref{cor:2} can then be 
expressed as
\begin{align}
\expected_{s_2...s_t|s_1,a_1...a_{t-1}}\bigg[
-\Vstar(s_1)+\gamma^{t-1}\Vstar(s_t)
+\sum_{i=1}^{t-1}\gamma^{i-1}\big(
r(s_i,a_i)-\tau\log\pistar(a_i|s_i)
\big)
\bigg]
&=0
\label{eq:claim cor2}
\end{align}
for all states $s_1$ and action sequences $a_1...a_{t-1}$.
We now show that \eqref{eq:cons1} implies \eqref{eq:claim cor2}.

\begin{proof}
Observe that by \eqref{eq:cons1} we have
\begin{align}
0 &=
\expected_{s_2...s_t|s_1,a_1...a_{t-1}}\bigg[
\sum_{i=1}^{t-1}\gamma^{i-1}\Big(
-\Vstar(s_i)+\gamma\Vstar(s_{i+1})
+
r(s_i,a_i)-\tau\log\pistar(a_i|s_i)
\Big)
\bigg]
\\
&=
\expected_{s_2...s_t|s_1,a_1...a_{t-1}}\bigg[
\sum_{i=1}^{t-1}\gamma^{i-1}\Big(
-\Vstar(s_i)+\gamma\Vstar(s_{i+1})
\Big)
\nonumber
\\
&\hspace*{68mm}
+
\sum_{i=1}^{t-1}\gamma^{i-1}\Big(
r(s_i,a_i)-\tau\log\pistar(a_i|s_i)
\Big)
\bigg]
\\
&=
\expected_{s_2...s_t|s_1,a_1...a_{t-1}}\bigg[
-\Vstar(s_1)+\gamma^{t-1}\Vstar(s_t)
+
\sum_{i=1}^{t-1}\gamma^{i-1}\Big(
r(s_i,a_i)-\tau\log\pistar(a_i|s_i)
\Big)
\bigg]
\end{align}
by a telescopic sum on the first term, 
which yields the result.
\end{proof}

\subsection{Proof of Theorem~\ref{thm:2} from Main Text}

{\bf Note}:
Again, we consider the more general case of a stochastic environment.
The consistency property in this setting is given by
\eqref{eq:v pi consistent general} above.

\begin{proof}
Consider a policy $\pi_\theta$ and value function $V_\phi$ that
satisfy the general consistency property for a stochastic environment:
$V_\phi(s) = -\tau\log\pi_\theta(a|s)+r(s,a)
+\gamma\expected_{s'|s,a}[V_\phi(s')]$
for all $s$ and $a$.
Then by Corollary~\ref{cor:consistent=>opt},
we must have
$V_\phi=\Vstar$ and $\pi_\theta=\pistar$.
Theorem~\ref{thm:2} follows as a special case when the 
environment is deterministic.
\end{proof}

\end{document}